\documentclass{article}
\usepackage{microtype}
\usepackage{graphicx}
\usepackage{subcaption}
\usepackage{multirow}
\usepackage{multicol}
\usepackage{booktabs}
\usepackage{threeparttable}
\usepackage{hyperref}
\usepackage{adjustbox}
\usepackage{xcolor}

\usepackage[table]{xcolor}
\usepackage{makecell}

\usepackage[preprint]{icml2026}

\usepackage{amsmath}
\usepackage{amssymb}
\usepackage{mathtools}
\usepackage{amsthm}
\usepackage{bm}

\usepackage[capitalize,noabbrev]{cleveref}

\theoremstyle{plain}
\newtheorem{theorem}{Theorem}[section]
\newtheorem{proposition}[theorem]{Proposition}

\theoremstyle{definition}

\theoremstyle{remark}

\newcommand{\ie}{\textit{i.e.},~}
\newcommand{\eg}{\textit{e.g.},~}

\usepackage[textsize=tiny]{todonotes}

\icmltitlerunning{Generalized Evidential Deep Learning: From a Bayesian Perspective}

\begin{document}

\twocolumn[
  \icmltitle{Generalized Evidential Deep Learning: From a Bayesian Perspective}

  \icmlsetsymbol{equal}{*}

  \begin{icmlauthorlist}
    \icmlauthor{Yuanye Liu}{sch}
    \icmlauthor{Yibo Gao}{sch}
    \icmlauthor{Yuanyang Chen}{sch}
    \icmlauthor{Xiahai Zhuang}{sch}
  \end{icmlauthorlist}

  \icmlaffiliation{sch}{School of Data Science, Fudan University, Shanghai, China}

  \icmlcorrespondingauthor{Xiahai Zhuang}{zxh@fudan.edu.cn}

  \icmlkeywords{Machine Learning, ICML}

  \vskip 0.3in
]

\printAffiliationsAndNotice{}

\begin{abstract}
Evidential Deep Learning (EDL) has emerged as an efficient, sampling-free strategy for uncertainty estimation.
A series of EDL variants have been proposed to address specific limitations of the original framework, achieving notable success.
However, the underlying theoretical structure of EDL and the relationships among these variants have received limited systematic investigation.
In this work, we establish a principled theoretical foundation for EDL by interpreting it within a generalized Bayesian framework that includes prior specification, posterior update, and training objective.
We further characterize evidential uncertainty from a Bayesian distributional uncertainty viewpoint, established via asymptotic analysis.
Building on this perspective, we further propose Generalized Evidential Deep Learning (GEDL), a unified and extensible framework that explicitly disentangles the roles of individual components and systematically relates GEDL to existing variants.
Extensive experiments demonstrate that GEDL yields comparable results on classification, uncertainty estimation and OOD detections, with theoretical grounding.
\end{abstract}

\section{Introduction}

Uncertainty estimation lies at the core of reliable machine learning systems~\cite{J_2023AIR_uncertainty_survey}.
Evidential Deep Learning (EDL) offers a sampling-free approach to uncertainty estimation by predicting parameters of conjugate distributions, most commonly the Dirichlet distribution for classification~\cite{C_2018NIPS_EDL}.
By interpreting network outputs as evidence supporting each class, EDL produces closed-form predictive uncertainty and has demonstrated promising empirical performance across a variety of tasks~\cite{C_2023CoRL_EDL_drive,J_2025MIA_MERIT}.

\begin{figure}[t]
    \centering
    \includegraphics[height=6.38cm]{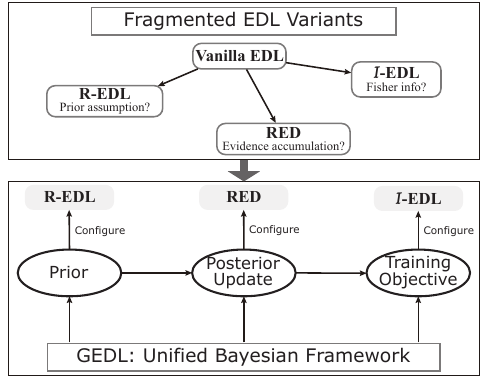}
    \caption{Overview of EDL variants and the proposed GEDL framework. Existing variants address different issues in a fragmented manner. GEDL unifies them under a single Bayesian framework by interpreting each variant as a principled choice, offering a coherent and theoretically grounded view of EDL.}
    \label{fig:teaser}
    \vspace{-1.3em}
\end{figure}
However, several studies found that the behavior of EDL models is highly sensitive to design choices in evidence modeling and regularization, leading to unstable performance and uncertainty estimates.
This has motivated a growing number of EDL variants that introduce targeted modifications to address specific empirical issues,
such as $\mathcal{I}$-EDL~\cite{C_2023ICLR_IEDL}, R-EDL~\cite{C_2024ICLR_REDL}, and RED~\cite{C_2023ICML_RED}.
While these methods achieve improvements in particular settings, they are often developed in isolation and lack a shared theoretical foundation.
Consequently, the EDL landscape has become increasingly fragmented, as shown in Fig.~\ref{fig:teaser}.
It is essential to make this framework explicit for understanding the behavior of EDL models, unifying existing variants, and designing principled improvements.
Therefore, we ask

\begin{it}
    What exactly is EDL doing from a Bayesian perspective?
\end{it}

To this end, we reformulate Evidential Deep Learning within a generalized Bayesian framework.
By explicitly mapping each component of Bayesian inference to its counterpart in EDL, we provide a unified interpretation of prior specification, posterior updating, and training objective in evidential models.
In particular, we show that the evidence predicted by neural networks can be understood as input-dependent pseudo-counts in a generalized Bayesian posterior, and that commonly used EDL losses correspond to variational objectives under tempered Bayesian updating.
From this perspective, evidential uncertainty admits a principled interpretation as Bayesian distributional uncertainty, supported by an asymptotic analysis.

Building on this unified view, as shown in Fig.~\ref{fig:teaser}, we further propose Generalized Evidential Deep Learning (GEDL), a modular Bayesian framework that makes each component of EDL explicit and configurable.
Within GEDL, existing EDL variants can be naturally interpreted from a Bayesian perspective, while also enabling systematic extensions beyond current designs.

Our contributions are summarized as follows:
\begin{itemize}
    \item We provide a principled Bayesian interpretation of EDL, establishing formal connections between Bayesian generative processes and key components of EDL, and further interpreting evidential uncertainty as Bayesian distributional uncertainty.
    \item We propose Generalized Evidential Deep Learning (GEDL), a unified and configurable framework that subsumes existing EDL variants within a single probabilistic model.
    \item Through extensive experiments, we show that GEDL achieves performance comparable to or better than existing EDL methods, while offering a more theoretically grounded and reliable approach to uncertainty estimation.
\end{itemize}

\section{Preliminary and Related Work}
\subsection{Bayesian Framework for Uncertainty Estimation}
Bayesian inference establishes a principled framework for uncertainty-aware reasoning by explicitly encoding uncertainty in probabilistic distributions~\cite{J_2023AIR_Bayes_review}.
At its core, the Bayesian framework is characterized by three essential components: prior beliefs that encode inductive assumptions, posterior updates that incorporate observed data, and uncertainty quantification through posterior distributions. 
This paradigm offers a coherent and interpretable foundation for uncertainty estimation, as uncertainty is treated as an intrinsic property of the inference process, rather than an auxiliary output.

In the context of predictive modeling, Bayesian methods naturally support uncertainty-aware decision making by propagating uncertainty from latent variables to predictions. 
Unlike point-estimate approaches, Bayesian models represent predictive uncertainty as a distribution, enabling richer characterizations of different uncertainty.

\subsection{Evidential Deep Learning}

\begin{figure*}[t]
  \centering
  \subcaptionbox{\label{fig:pgm}}{
    \includegraphics[height=5.26cm]{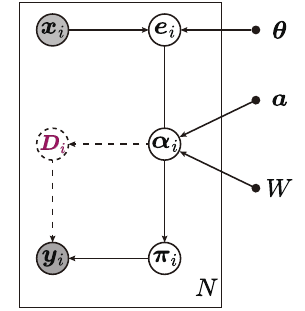}
  }
  \hfill
  \subcaptionbox{\label{fig:relationship}}{
    \includegraphics[height=5.26cm]{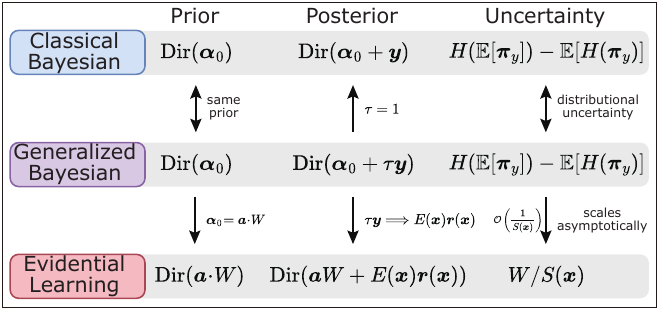}
  }
  \caption{(a) Probabilistic graphical model of EDL. 
(b) Connections between EDL and Bayesian inference from the perspectives of prior, posterior update, and uncertainty. EDL emerges as a parameterized generalized Bayesian framework whose uncertainty mass corresponds to Bayesian distributional uncertainty.
  }
  \label{fig:main}
\end{figure*}

In Evidential Deep Learning, predictions are represented as \emph{subjective opinions} that explicitly allocate belief mass and uncertainty mass, \ie $\bm{D}=\{b_1,\dots,b_K,u\}$, inferred by neural networks directly from data.
As illustrated in Fig.~\ref{fig:pgm}, given an input $\bm{x}$, a neural network parameterized by $\bm{\theta}$ outputs a non-negative evidence vector $\bm{e}(\bm{x})=f_{\bm{\theta}}(\bm{x})=\{e_1,\dots,e_K\}$.
This evidence is then mapped to the concentration parameters of a Dirichlet distribution over the latent class probability vector $\bm{\pi}$ via
\begin{equation}\label{eq:e2alpha}
    \bm{\alpha}(\bm x)=\bm{e}(\bm{x})+W\cdot\bm{a},
\end{equation}
where $\bm{a}$ denotes the \emph{base rate} encoding prior class preference, and $W$ is the \emph{prior strength}.
This construction induces a Dirichlet posterior over $\bm{\pi}$ and enables closed-form prediction and uncertainty estimation in a single forward pass.
The evidence vector $\bm{e}(\bm{x})$ is decomposed into class-wise belief masses and an uncertainty mass as
\begin{equation}\label{eq:compute_opinion}
b_k(\bm{x}) = \frac{e_k(\bm{x})}{S(\bm{x})}, 
\quad 
u(\bm{x}) = \frac{W}{S(\bm{x})}, 
\quad 
S(\bm{x})=\sum_{k}\alpha_k(\bm{x}),
\end{equation}
which together form a subjective opinion.
The resulting predictive probability admits an equivalent Bayesian interpretation as the posterior predictive mean of the Dirichlet distribution,
\begin{equation}\label{eq:opinion2prob}
p_k(\bm{x})
= b_k(\bm{x}) + u(\bm{x}) a_k
= \mathbb{E}_{\bm{\pi}\sim \mathrm{Dir}(\bm{\alpha}(\bm{x}))}[\pi_k]
= \frac{\alpha_k(\bm{x})}{S(\bm{x})},
\end{equation}
which naturally interpolates between data-driven belief and prior knowledge encoded by the base rate $\bm{a}$.

Subsequent work has identified several limitations of the vanilla EDL formulation~\cite{C_2018NIPS_EDL}, which motivated targeted extensions that modify specific components of the evidential learning pipeline.
R-EDL~\cite{C_2024ICLR_REDL} relaxes the restriction of a fixed prior strength $W$, 
to improve uncertainty calibration.
$\mathcal{I}$-EDL~\cite{C_2023ICLR_IEDL} adopts an information-theoretic perspective and regulates evidence accumulation through the Fisher Information Matrix, to prevent excessive confidence and stabilize uncertainty estimates.
RED~\cite{C_2023ICML_RED} introduces weighting strategies to avoid zero-evidence regions through correct evidence regularization.
These methods demonstrate that performance and uncertainty behavior in EDL are highly sensitive to different design choices in evidential modeling, including how evidence is generated, weighted, and regularized during training.
However, existing variants typically modify these components in isolation and are largely motivated by empirical observations.
This motivates the need for a unified and principled Bayesian framework to systematically interpret and design evidential 
learning methods.

\section{Evidential Deep Learning acts as a Bayesianist}
We interpret EDL as a parameterized form of generalized Bayesian inference, consisting of prior specification, posterior update, training objective, and uncertainty quantification.
In this view, as shown in Fig.~\ref{fig:relationship}, the process of the forward evidence-to-Dirichlet corresponds to a generalized posterior update rule, while commonly used EDL loss functions admit a variational inference interpretation with a tempered likelihood.

Furthermore, we analyze the uncertainty mass produced by EDL from the perspective of Bayesian distributional uncertainty, and show that it is consistent with established statistical measures in the asymptotic regime.

All notations used throughout this paper are summarized in Table~\ref{tab:notation} in Appendix~\ref{sec:notation}.

\subsection{Prior}
Under the Bayesian interpretation of EDL, the base rate $\bm{a}$ represents a prior belief over class probabilities.
As shown in Eq.~\eqref{eq:e2alpha}, when no data-dependent evidence is provided, the induced Dirichlet distribution is entirely determined by the base rate and its associated strength $W$.
This interpretation becomes explicit through the mapping from subjective opinions to predictive probabilities in Eq.~\eqref{eq:opinion2prob}.
In the absence of evidence, where $\bm{b}=\bm{0}$ and $u=1$, the predictive distribution collapses to the base rate $\bm{a}$, reflecting pure prior belief.
Most existing EDL-based methods adopt a uniform prior, \ie $\bm{a} = (\tfrac{1}{K},\ldots,\tfrac{1}{K})$, implicitly assuming no preference among classes.
MERIT~\cite{J_2025MIA_MERIT} explores a more flexible specification by setting the base rate according to the empirical label distribution, allowing prior beliefs to reflect class imbalance in the data.

Beyond the base rate, the scalar $W$ in Eq.~\eqref{eq:e2alpha} controls the strength of the prior Dirichlet distribution.
In the vanilla EDL formulation, $W$ is fixed to the number of classes $K$, ensuring that in the absence of evidence the prior reduces to a uniform Dirichlet with $\bm{\alpha}=0+W\cdot \bm{a}=\bm{1}$.

Recent work such as R-EDL~\cite{C_2024ICLR_REDL} relaxes this constraint and treats $W$ as a tunable hyperparameter, reporting empirical performance gains.
From a Bayesian perspective, this is a natural extension: the prior strength should encode how strongly the model trusts its prior belief before observing evidence.
Consistent with this view, prior study~\cite{C_2022_prior_weight} suggests setting $W$ to a small constant, \eg $W=2$, to represent a weakly informative prior in presence of evidence.

\subsection{Posterior Update}\label{subsec:post_update}

Posterior updating describes how prior beliefs are revised given new observations~\cite{B_1995_bayesian}. 
For multi-class classification with categorical observations, the Dirichlet distribution is the conjugate prior of the categorical likelihood, leading to closed-form posterior updates~\cite{B_2006_PRML}.

Let $\bm{\pi}=(\pi_1,\dots,\pi_K)\in\Delta^{K-1}$ denote the latent class probability vector and assume a Dirichlet prior
$
\bm{\pi} \sim \mathrm{Dir}(\bm{\alpha}_0),
$
where $\bm{\alpha}_0=(\alpha_{0,1},\dots,\alpha_{0,K})$ encodes prior beliefs. 
Given an observed label $y$, represented as a one-hot vector $\bm{y}$, the categorical likelihood is
$
p(\bm{y}\mid\bm{\pi})=\prod_{k=1}^K \pi_k^{y_k}.
$
By conjugacy, the posterior distribution admits a closed-form update,
\begin{equation}
p(\bm{\pi}\mid\bm{y})=\mathrm{Dir}(\bm{\alpha}_0+\bm{y}),
\end{equation}
where the posterior concentration parameters are obtained by adding class-wise counts to the prior. 
More generally, for multiple observations with count vector $\bm{n}$, the update takes the additive form
$
\bm{\alpha}=\bm{\alpha}_0+\bm{n}.
$
This reveals posterior parameters are fully determined by the prior concentration and the sufficient statistics of the data, in Dirichlet–categorical conjugacy. Detailed derivation can be found in Appendix.\ref{prop:conj_dir}.

\noindent\textbf{Generalized posterior update.}
Classical Bayesian updating assumes that each observation contributes equally to the posterior. 
Generalized Bayesian inference relaxes this assumption by introducing a temperature parameter $\tau>0$ to control the influence of the likelihood~\cite{J_2001_bayesian_consistency,J_2016_General_Bayes}. 
For a Dirichlet prior and categorical likelihood, the tempered posterior remains in the Dirichlet family:
\begin{equation}\label{eq:scaled_posterior}
q_\tau(\bm{\pi}\mid\bm{y})
\;\propto\;
p(\bm{y}\mid\bm{\pi})^{\tau} p(\bm{\pi})
\;=\;
\mathrm{Dir}\big(\bm{\alpha}_0 + \tau \bm{y}\big).
\end{equation}

Compared with the classical update, the generalized posterior replaces the unit count $\bm{y}$ with a \emph{scaled pseudo-count} $\tau\bm{y}$. 
The temperature $\tau$ thus admits a clear Bayesian interpretation as an \emph{evidence strength} parameter, controlling the relative influence of the observation with respect to the prior.
Detailed derivation can be found in Appendix.\ref{prop:gen_conj_dir}.

\paragraph{Posterior Update in EDL.}
In EDL, the predictive distribution is parameterized as a Dirichlet distribution whose concentration parameters are obtained via an evidence-to-$\bm{\alpha}$ transformation in Eq.~\eqref{eq:e2alpha}.
Here, we show this transformation admits a direct Bayesian interpretation as a parameterized generalized posterior update. 

EDL can be viewed as learning the sufficient statistics of Dirichlet distribution in a data-dependent manner.
Specifically, Eq.~\eqref{eq:e2alpha} corresponds to replacing the discrete count increment $\tau\bm{y}$ in Eq.~\eqref{eq:scaled_posterior} with a continuous, input-conditioned pseudo-count vector $\bm{e}(x)$, thereby defining an implicit generalized posterior
\begin{equation}
q_{\bm{\theta}}(\bm{\pi}\mid \bm{x})
=
\mathrm{Dir}\big(\bm{\pi}\mid W\bm{a}+\bm{e}(\bm{x};\bm{\theta})\big),
\label{eq:edl_posterior}
\end{equation}
Under this interpretation, the network parameterizes a posterior update operator that maps an input $\bm{x}$ to a Dirichlet posterior over the latent categorical probabilities $\bm{\pi}\in\Delta^{K-1}$, instead of simply point estimation.

The evidence vector $\bm{e}(\bm{x})$ therefore inherits a precise Bayesian semantics: it acts as a vector of pseudo-counts that controls the posterior concentration.
To make this interpretation explicit, it is convenient to decompose the evidence into a \emph{strength} component and a \emph{direction} component,
\begin{equation}
    \bm{e}(\bm{x})=\|\bm{e}(\bm{x})\|\cdot\frac{\bm{e}(\bm{x})}{\|\bm{e}(\bm{x})\|}\triangleq E(\bm{x})\bm{r}(\bm{x}),
\end{equation}
where $E(\bm{x})\ge 0$ denotes the evidence strength and $\bm{r}(\bm{x})\in\Delta^{K-1}$ specifies the allocation of evidence across classes.
Under this decomposition, the posterior update in Eq.~\eqref{eq:e2alpha} and Eq.~\eqref{eq:edl_posterior} can be equivalently written as
\begin{equation}
\bm{\alpha}(\bm{x})
=
W\bm{a} + E(\bm{x})\,\bm{r}(\bm{x})
\;\approx\;
\bm{\alpha}_0 + \tau \bm{y},
\label{eq:strength_allocation}
\end{equation}
which makes the connection to tempered Bayesian updating explicit.
In the classical case, the one-hot label $\bm{y}$ specifies an extreme allocation of pseudo-counts, while the tempering parameter $\tau$ controls the magnitude of the update.
EDL generalizes this mechanism by allowing both the allocation $\bm{r}(\bm{x})$ and the strength $E(\bm{x})$ to be input-dependent and learned from data.

This interpretation also yields a principled view of the RED variant~\cite{C_2023ICML_RED}.
RED introduces an additional regularization that encourages higher evidence for correctly predicted samples, preventing them from collapsing into \emph{zero-evidence regions}.
From a generalized Bayesian perspective, this mechanism can be understood as modifying the effective pseudo-counts contributed by each observation, by enforcing a strengthened posterior update for correctly classified samples, \ie enlarging the term $\tau \bm{y}$.

\subsection{Training Objective from Variational Inference}\label{subsec:loss}

We now derive the training objective of EDL from a variational inference perspective.
This derivation clarifies the origin of commonly used EDL loss functions and provides a principled interpretation of the weighting applied to the regularization term~\cite{C_2018NIPS_EDL}.

We consider the latent categorical vector $\bm{\pi}$ and adopt the Dirichlet family as the variational distribution, following Eq.~\eqref{eq:edl_posterior}.
Our goal is to approximate the tempered posterior
$q_{\bm{\tau}}(\bm{\pi} \mid \bm{y})$ introduced in Eq.~\eqref{eq:scaled_posterior}
with a variational posterior $q_{\bm{\theta}}(\bm{\pi} \mid \bm{x})$ parameterized by the network output.
This approximation is obtained by minimizing the Kullback–Leibler divergence between the two distributions, which induces a variational optimization objective that admits the following decomposition:
\begin{equation}\label{eq:loss}
\begin{split}
    \mathrm{KL}&\!\left(
        q_{\theta}(\bm{\pi}\mid \bm{x})
        \;\|\;
        q_\tau(\bm{\pi} \mid \bm{y})
    \right)
    =\mathrm{KL}\!\left(q_{\theta}(\bm{\pi}\mid \bm{x})\;\|\;p(\bm{\pi})\right)\\
    &+ \tau\,\mathbb{E}_{q_{\bm{\theta}}(\bm{\pi}\mid \bm{x})}\!\big[-\log p(\bm{y}\mid \bm{\pi})\big] + \log(Z) ,
\end{split}
\end{equation}
where $Z$ is a normalization constant independent of $\theta$ (see Appendix~\ref{prop:elbo} for full derivation).
Then we could yield an equivalent objective that corresponds to minimizing the negative evidence lower bound (ELBO):
\begin{equation}
\label{eq:elbo}
\mathcal{L}_{\text{VI}}
=
\mathbb{E}_{q_{\bm{\theta}}(\bm{\pi}\mid \bm{x})}
\big[-\log p(\bm{y}\mid \bm{\pi})\big]
+
\frac{1}{\tau}\,
\mathrm{KL}\!\left(
q_{\bm{\theta}}(\bm{\pi}\mid \bm{x})
\;\|\;
p(\bm{\pi})
\right).
\end{equation}
The first term encourages accurate prediction under the posterior predictive distribution,
while the second term regularizes the variational posterior toward the prior.
The temperature parameter $\tau$ controls the trade-off between data fitting and prior regularization, which corresponds to the inverse of the KL weight commonly denoted as $\lambda$ in other EDL works.
Under the Dirichlet--categorical model, Eq.~\eqref{eq:elbo} admits closed-form expressions, given in Appendix~\ref{prop:closed-form}.

More fundamentally, $\mathcal{L}_{\mathrm{VI}}$ can be viewed as a special case of the generalized Bayesian updating framework~\cite{J_2016_General_Bayes}.
As shown in Appendix~\ref{prop:generalized_belief}, our objective is exactly equivalent to the Gibbs posterior induced by choosing the negative log-likelihood as the loss function.

Within the same generalized Bayesian framework, alternative loss functions can be naturally incorporated.
In particular, the MSE-based objective proposed in~\cite{C_2018NIPS_EDL} arises as a Gibbs posterior when the squared error replaces the log-likelihood in the variational objective (see Appendix~\ref{prop:MSE-loss}).
From this viewpoint, the loss function of $\mathcal{I}$-EDL~\cite{C_2023ICLR_IEDL} can also be interpreted as a Gibbs variational objective augmented with a Fisher-information regularizer, which explicitly constrains evidence accumulation and stabilizes uncertainty estimation.

Moreover, $\mathcal{L}_{\mathrm{VI}}$ admits an alternative theoretical interpretation from a PAC-Bayes perspective.
It aligns with PAC-Bayes-inspired learning objectives that balance empirical risk and posterior regularization~\cite{C_2021_bayes_pac,C_2023ICLR_IEDL}.
The derivation can be found in Appendix~\ref{prop:PACbound}.

Together, these interpretations demonstrate that $\mathcal{L}_{\mathrm{VI}}$ is not an ad-hoc design, but a unifying objective that connects variational inference, generalized Bayesian learning, and PAC-Bayes theory within a single coherent framework.

\begin{table*}[t]
\centering
\caption{Unified Bayesian interpretation of EDL variants. Each method is characterized in terms of prior strength, evidence strength, and training objective.}
\label{tab:unified_edl}
\resizebox{\textwidth}{!}
{
\begin{tabular}{lcccc}
\toprule
& \multicolumn{1}{c}{\textbf{Prior Strength $W$}} 
& \multicolumn{1}{c}{\textbf{Evidence Strength $\tau$}} 
& \multicolumn{1}{c}{\textbf{Training Objective}} \\
\midrule
Vanilla EDL
& Fixed as $K$ 
& $\tau_t^{-1}=\min(1,t/10)$ 
& KL only on misclassified samples  \\

$\mathcal{I}$-EDL
& Fixed as $K$ 
& $\tau_t^{-1}=\min(1,t/10)$
& KL only on misclassified samples with MSE-based likelihood \\

R-EDL
& Tuned hyperparameter 
& $\tau_t^{-1}=\min(1,t/10)$ 
& KL only on misclassified samples \\

RED
& Fixed as $K$ 
& $\tau_t^{-1}=
\begin{cases}
\min(1,t/10), & \text{incorrect} \\
u=W/S, & \text{correct}
\end{cases}$ 
& \makecell[c]{Regularize misclassified samples \\ 
Encourage higher evidence for correctly classified samples} \\
\midrule
\textbf{GEDL} 
& 
$\displaystyle W=\frac{K + C_w K \sum_k e_k(x)}{1 + K \sum_k e_k(x)}$ 

& 
$\displaystyle \tau_t=\frac{c}{\sum_{i\le t}\mathbb{E}[S(x_i)]}$ 

& Unified KL derived from ELBO with tempered likelihood\\
\bottomrule
\end{tabular}
}
\end{table*}

\subsection{Evidential Uncertainty from Bayesian Perspective}
Building on the established Bayesian interpretation of EDL,
we turn to the question of how evidential uncertainty ($u$ in Eq.~\eqref{eq:compute_opinion}) should be interpreted within this framework.

In a fully Bayesian treatment, the predictive distribution is obtained by marginalizing over both the latent class probability vector $\bm{\pi}$ and the model parameters $\bm{\theta}$,
\begin{equation}
        P(\bm{y}\mid\bm{x},\mathcal{D}) = \iint p(\bm{y}\mid\bm{\pi})\,p(\bm{\pi}\mid\bm{x},\theta)\,p(\theta\mid\mathcal{D})\,{\rm d}\bm{\pi} {\rm d}\theta.
\end{equation}
This predictive uncertainty can be decomposed into three conceptually distinct sources: aleatoric uncertainty arising from $p(\bm{y}\mid\bm{\pi})$, distributional uncertainty arising from $p(\bm{\pi}\mid\bm{x},\bm{\theta})$, and epistemic uncertainty arising from $p(\bm{\theta}\mid\mathcal{D})$.

EDL differs from fully Bayesian neural networks in that it replaces the posterior distribution over network parameters with a point estimate $\hat{\bm{\theta}}$.
This approximation can be viewed as substituting $p(\bm{\theta}\mid\mathcal{D})$ with a Dirac delta distribution $\delta(\bm{\theta}-\hat{\bm{\theta}})$.
Under this assumption, the Bayesian predictive distribution simplifies to
\begin{equation}
p(\bm{y}\mid\bm{x},\mathcal{D})
\approx \int p(\bm{y}\mid\bm{\pi})\,p(\bm{\pi}\mid\bm{x}\,\hat{\bm{\theta}})\, \mathrm{d}\bm{\pi},
\end{equation}
where the uncertainty associated with model parameters is no longer explicitly represented.
Consequently, all predictive uncertainty in EDL is encoded solely in the Dirichlet posterior over the latent class probability vector $\bm{\pi}$.
We propose that evidential uncertainty $u$ reflects \emph{distributional uncertainty} in our full Bayesian sense.

Following~\cite{C_2018NIPS_prior_net}, mutual information is used as a principled measure of Bayesian distributional uncertainty and is defined as
\begin{equation}\label{eq:dist_uncertainty}
    I(\bm{y}; \bm{\pi} \mid \bm{x}, \hat{\theta}) 
    = H\big(\mathbb E[\bm{\pi_{\bm{y}}}]\big) 
    - \mathbb{E} \big[ H(\bm{\pi}_{\bm{y}}) \big],
\end{equation}
where $\bm{\pi}_{\bm{y}}$ is short for $p(\bm{y}\mid \bm{\pi})$. 
As shown in Appendix.~\ref{prop:u}, when $\bm{\alpha}(\bm{x}) / S(\bm{x}) \to \bar{\bm{\pi}} \in \Delta^{K-1}$ as $S(\bm{x}) \to \infty$, we have the asymptotic expansion,
\begin{equation}\label{eq:dist_MI}
    I(\bm{y}; \bm{\pi} \mid \bm{x}, \hat{\theta}) 
    = \frac{K - 1}{2S(\bm{x})} + \mathcal{O}\!\left( \frac{1}{S(\bm{x})^2} \right),
\end{equation}
demonstrating that this mutual information form of distributional uncertainty is asymptotically proportional to $1 / S(\bm{x})$. 
Similarly, the sum of marginal variance of each component satisfies as well,
\begin{equation}\label{eq:dist_var_sum}
    \sum_k\mathrm{Var}[\pi_k \mid \bm{x}, \hat{\theta}] 
    = \frac{\sum_k\bar{\pi}_k (1 - \bar{\pi}_k)}{S(\bm{x}) + 1}
    = \mathcal{O}\!\left( \frac{1}{S(\bm{x})} \right),
\end{equation}
confirming that total dispersion in $\bm{\pi}$ scales with $1 / S(\bm{x})$. The derivation can be found in Appendix~\ref{prop:expectation}.

This asymptotic behavior of $u= W / {S(\bm{x})}$ aligns with recent critiques \cite{C_2022NIPS_pitfallEUQ, C_2024NIPS_areEDLmirage}, showing that EDL methods do not reduce epistemic uncertainty to zero, even as data size tend to be infinity.

\begin{table*}[t]
\centering

\caption{Quantitative comparison of uncertainty estimation methods on MNIST (top) and CIFAR-10 (bottom) for in-distribution classification and out-of-distribution detection. Results are reported as mean $\pm$ std over multiple runs. \textbf{Bold} indicates the best performance and \underline{underlined} indicates the second best. GEDL consistently achieves superior or comparable performance across both ID and OOD settings, demonstrating more reliable uncertainty modeling than existing EDL variants.}
\label{tab:results}

\resizebox{0.95\textwidth}{!}
{
\begin{threeparttable}

\begin{subtable}{\textwidth}
\centering
\begin{tabular*}{\textwidth}{@{\extracolsep{\fill}} l c c cc cc}
\toprule
\multirow{2}{*}{Method}
& \multicolumn{2}{c}{MNIST ID classification}
& \multicolumn{2}{c}{MNIST$\rightarrow$KMNIST}
& \multicolumn{2}{c}{MNIST$\rightarrow$FMNIST} \\
\cmidrule(lr){2-3}\cmidrule(lr){4-5}\cmidrule(lr){6-7}
& Acc  & Conf.MP & MP & UM & MP & UM \\
\midrule
MC Dropout & 99.26$\pm$0.00  & 99.98$\pm$0.00 & 94.00$\pm$0.10 & -- & 96.56$\pm$0.30 & -- \\
DUQ        & 98.65$\pm$0.10  & 99.97$\pm$0.00 & 98.52$\pm$0.10 & -- & 97.92$\pm$0.60 & -- \\
KL-PN      & 99.01$\pm$0.00  & 99.92$\pm$0.00 & 92.97$\pm$1.20 & 93.39$\pm$1.00 & 98.44$\pm$0.10 & 98.16$\pm$0.00 \\
RKL-PN     & 99.21$\pm$0.00  & 99.67$\pm$0.00 & 60.76$\pm$2.90 & 53.76$\pm$3.40 & 78.45$\pm$3.10 & 72.18$\pm$3.60 \\
PostN      & \underline{99.34$\pm$0.00} & 99.98$\pm$0.00 & 95.75$\pm$0.20 & 94.59$\pm$0.30 & 97.78$\pm$0.20 & 97.24$\pm$0.30 \\
\midrule
EDL        & 98.22$\pm$0.31 & \textbf{99.99$\pm$0.00} & 97.02$\pm$0.76 & 96.31$\pm$2.03 & 98.11$\pm$0.44 & 98.08$\pm$0.42 \\
$\mathcal{I}$-EDL & 99.21$\pm$0.08  & 99.98$\pm$0.00 & 98.34$\pm$0.24 & 98.33$\pm$0.24 & \underline{98.89$\pm$0.28} & 98.86$\pm$0.29 \\
R-EDL      & 99.33$\pm$0.03  & \textbf{99.99$\pm$0.00} & \underline{98.69$\pm$0.19} & \underline{98.69$\pm$0.20} & \textbf{99.29$\pm$0.11} & \textbf{99.29$\pm$0.12} \\
\midrule
\textbf{GEDL (Ours)} & \textbf{99.58$\pm$0.02}  & 99.95$\pm$0.01 & \textbf{99.80$\pm$0.01} & \textbf{99.70$\pm$0.04} & 98.54$\pm$0.13 & \underline{98.99$\pm$0.05} \\
\bottomrule
\end{tabular*}
\end{subtable}

\vspace{6pt}

\begin{subtable}{\textwidth}
\centering
\begin{tabular*}{\textwidth}{@{\extracolsep{\fill}} l c c cc cc}
\toprule
\multirow{2}{*}{Method}
& \multicolumn{2}{c}{CIFAR10 ID classification}
& \multicolumn{2}{c}{CIFAR10$\rightarrow$SVHN}
& \multicolumn{2}{c}{CIFAR10$\rightarrow$CIFAR100} \\
\cmidrule(lr){2-3}\cmidrule(lr){4-5}\cmidrule(lr){6-7}
& Acc & Conf.MP & MP & UM & MP & UM \\

\midrule
MC Dropout & 82.84$\pm$0.10  & 97.15$\pm$0.00 & 51.39$\pm$0.10 & -- & 45.57$\pm$1.00 & -- \\
DUQ        & 89.33$\pm$0.20  & 97.89$\pm$0.30 & 80.23$\pm$3.40 & -- & 84.75$\pm$1.10 & -- \\
KL-PN      & 27.46$\pm$1.70  & 50.61$\pm$4.00 & 43.96$\pm$1.90 & 43.23$\pm$2.30 & 61.41$\pm$2.8 & 61.53$\pm$3.40 \\
RKL-PN     & 64.76$\pm$0.30  & 86.11$\pm$0.40 & 53.61$\pm$1.10 & 49.37$\pm$0.80 & 55.42$\pm$2.6 & 54.74$\pm$2.80 \\
PostN      & 84.85$\pm$0.00 & 97.76$\pm$0.00 & 80.21$\pm$0.20 & 77.71$\pm$0.30 & 81.96$\pm$0.8 & 82.06$\pm$0.80 \\
\midrule
EDL        & 83.55$\pm$0.64 & 97.86$\pm$0.17 & 78.87$\pm$3.50 & 79.12$\pm$3.69 & 84.30$\pm$0.67 & 84.18$\pm$0.74 \\
$\mathcal{I}$-EDL      & 89.20$\pm$0.32 & \underline{98.72$\pm$0.12} & 83.26$\pm$2.44 & 82.96$\pm$2.17 & 85.35$\pm$0.69 & 84.84$\pm$0.64 \\
R-EDL      & \underline{90.09$\pm$0.30}& \textbf{98.98$\pm$0.05} & \underline{85.00$\pm$1.22} & \underline{85.00$\pm$1.22} & \underline{87.72$\pm$0.31} & \textbf{87.73$\pm$0.31} \\
\addlinespace[2pt]
\midrule
\textbf{GEDL (Ours)} & \textbf{90.22$\pm$0.49}  & 97.15$\pm$3.72 & \textbf{89.26$\pm$3.26} & \textbf{92.56$\pm$6.02} & \textbf{88.27$\pm$2.51} & \underline{85.42$\pm$5.14} \\
\bottomrule
\end{tabular*}
\end{subtable}

\end{threeparttable}
}
\end{table*}

\section{Generalized Evidential Deep Learning}

Building on the Bayesian interpretation of evidential learning, we now address a more fundamental question:

\emph{What constitutes a complete and principled evidential inference framework from a Bayesian perspective?}

We answer this question by proposing \emph{Generalized Evidential Deep Learning (GEDL)}, a unified formulation that explicitly characterizes the three core components of evidential learning under a Bayesian framework: prior specification, posterior update, and training objective.
By disentangling these components, GEDL provides a transparent view of how different design choices influence posterior behavior and uncertainty estimation, and offers a principled foundation for systematically extending existing EDL methods.

A comparative summary of GEDL and representative EDL variants is presented in Table~\ref{tab:unified_edl}.

\paragraph{Prior specification.}

In GEDL, we adopt a convergent prior scheme for the prior strength, inspired by~\cite{C_2022_prior_weight}.
Rather than fixing the prior strength $W$ as a constant, we allow it to vary smoothly as a function of the accumulated evidence.

As formalized in Eq.~\eqref{eq:weight_update}, the prior strength is a bounded, monotonic function of the total evidence $\sum_k e(x)$, and converges to a predefined constant $C_w$ as evidence increases.
This design ensures that the prior remains influential in low-evidence regimes, while its relative impact diminishes as sufficient evidence is accumulated, preventing over-regularization in confident predictions.
\begin{equation}
W = \frac{K + C_w\, K \sum_k e_k(x)}{1 + K \sum_k e_k(x)} .
\label{eq:weight_update}
\end{equation}

\paragraph{Posterior update.}

As discussed in Sec.\ref{subsec:post_update}, the posterior update in EDL can be interpreted as a generalized Bayesian update tempered by an evidence strength parameter $\tau$.
Most existing EDL methods rely on heuristic scheduling strategies for this parameter. A common practice is to anneal the KL coefficient during training, for example, using $\tau^{-1}=\lambda=\min(1, t/10)$\cite{C_2018NIPS_EDL}.
Although empirically effective, such schedules lack a clear Bayesian justification.

Under our Bayesian interpretation, the KL weight directly corresponds to the inverse of the tempering parameter $\tau$ and therefore governs the strength of evidence accumulation in generalized Bayesian inference.
Specifically, we employ a scheduled evidence strength $\tau$ that evolves according to
\begin{equation}
\tau_t
= \frac{C_{\tau}}{\sum_{i\le t}\mathbb E[S(x_i)]} ,
\label{eq:tau_schedule}
\end{equation}
where $t$ denotes the index of epoch, $S(\bm{x}_i)$ denotes the Dirichlet strength at training step $i$ and $C_{\tau}$ is a constant.
We interpret the annealing of $\tau$ as controlling the effective sample size of generalized Bayesian updating, which is consistent with~\cite{J_2001_bayesian_consistency}.
The detailed implementation and sensitivity analysis for $\tau_t$ can be found in Appendix.~\ref{apdx:exp_set} and~\ref{apdx:addition_results}.

\begin{figure*}[t]
    \begin{tabular}{cccc}
         \includegraphics[width=0.23\textwidth]{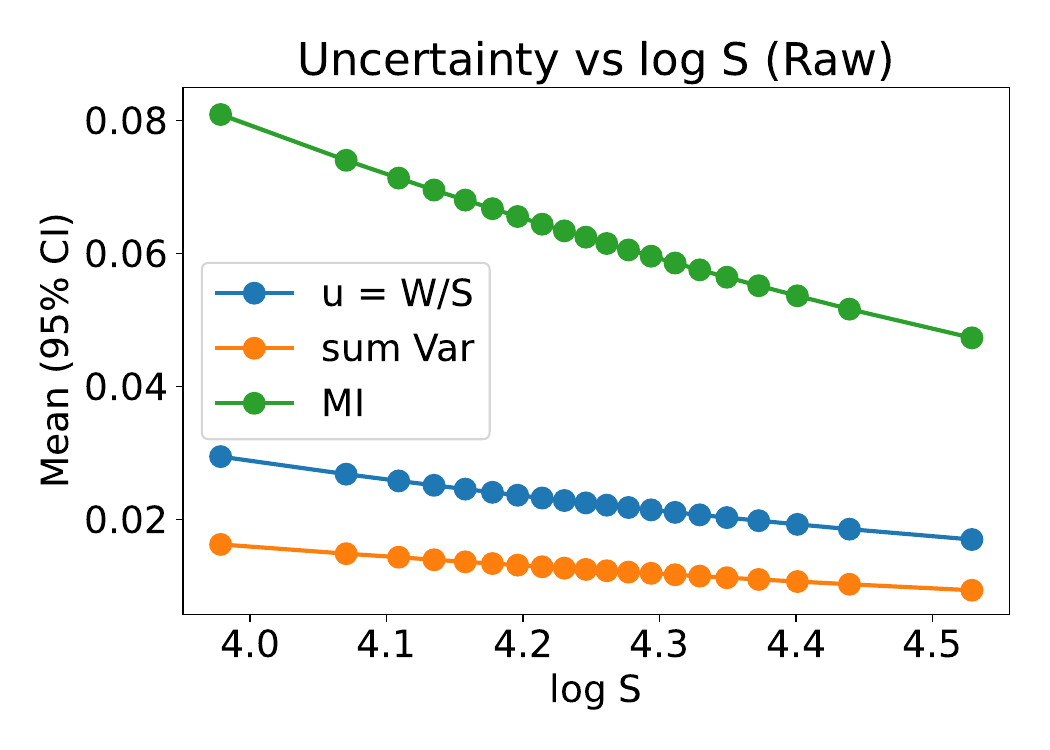}
     &
     \includegraphics[width=0.23\textwidth]{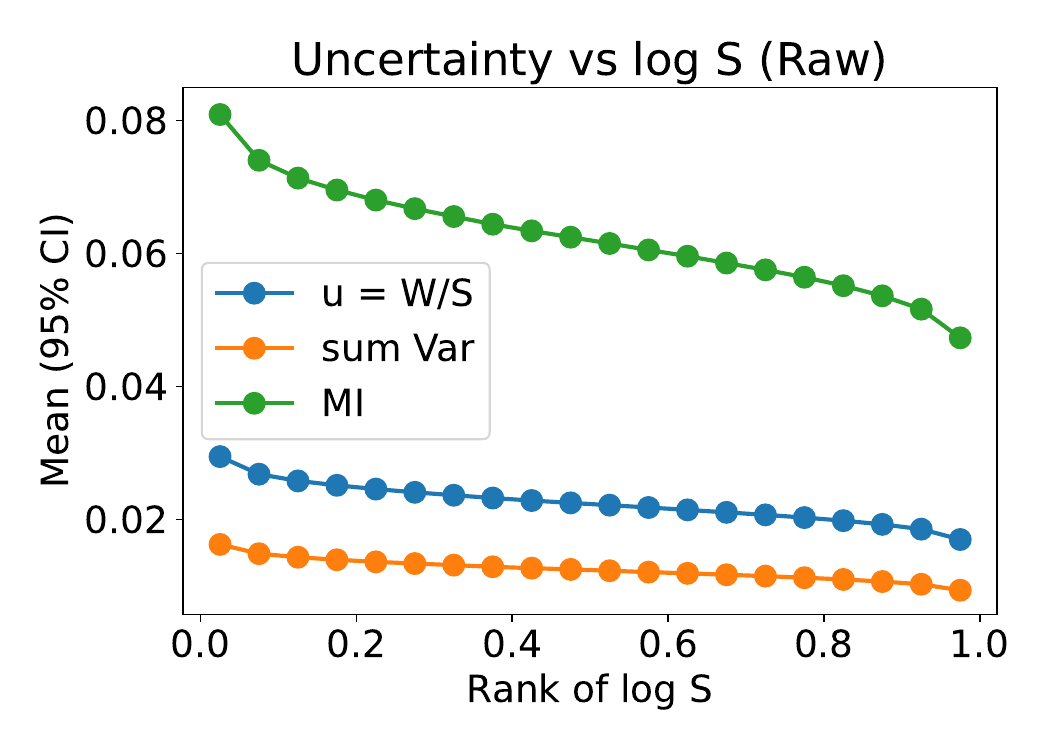}
    &
     \includegraphics[width=0.23\textwidth]{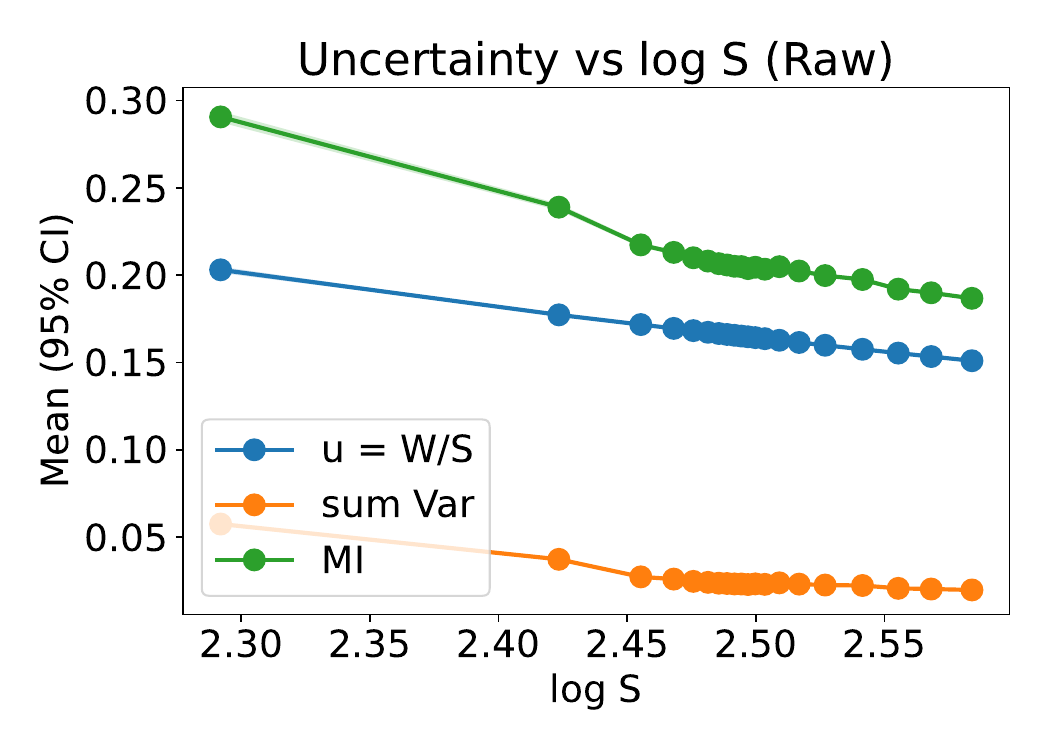}
     &
     \includegraphics[width=0.23\textwidth]{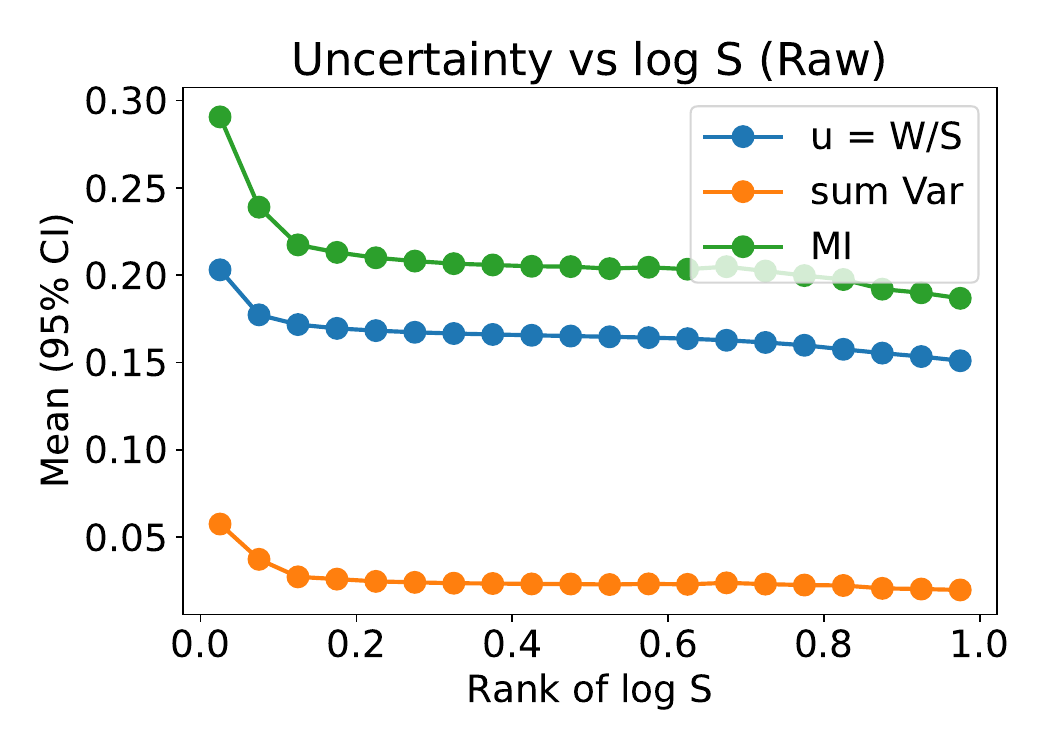}
     \\
     \includegraphics[width=0.23\textwidth]{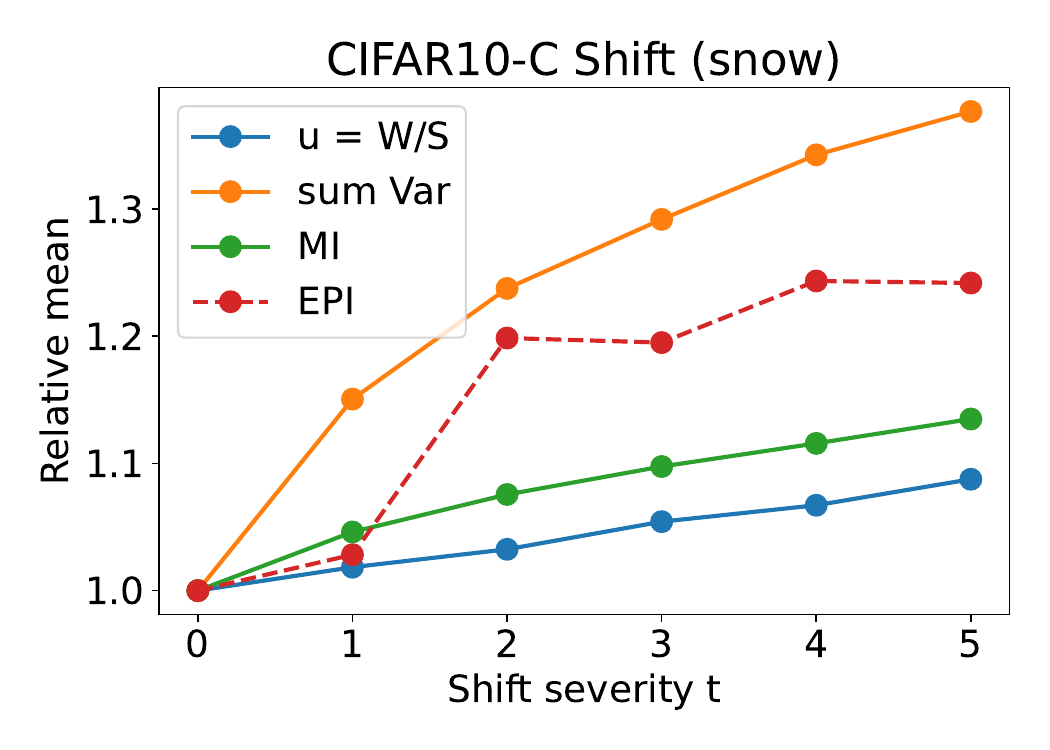}
     &
     \includegraphics[width=0.23\textwidth]{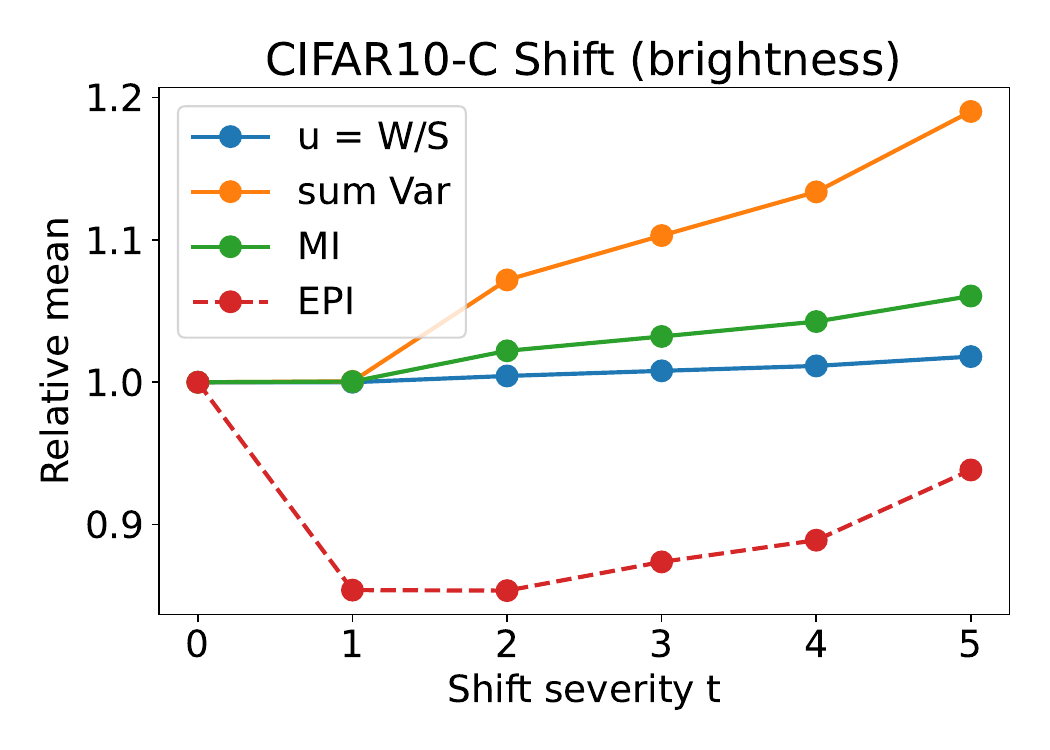}
    &
     \includegraphics[width=0.23\textwidth]{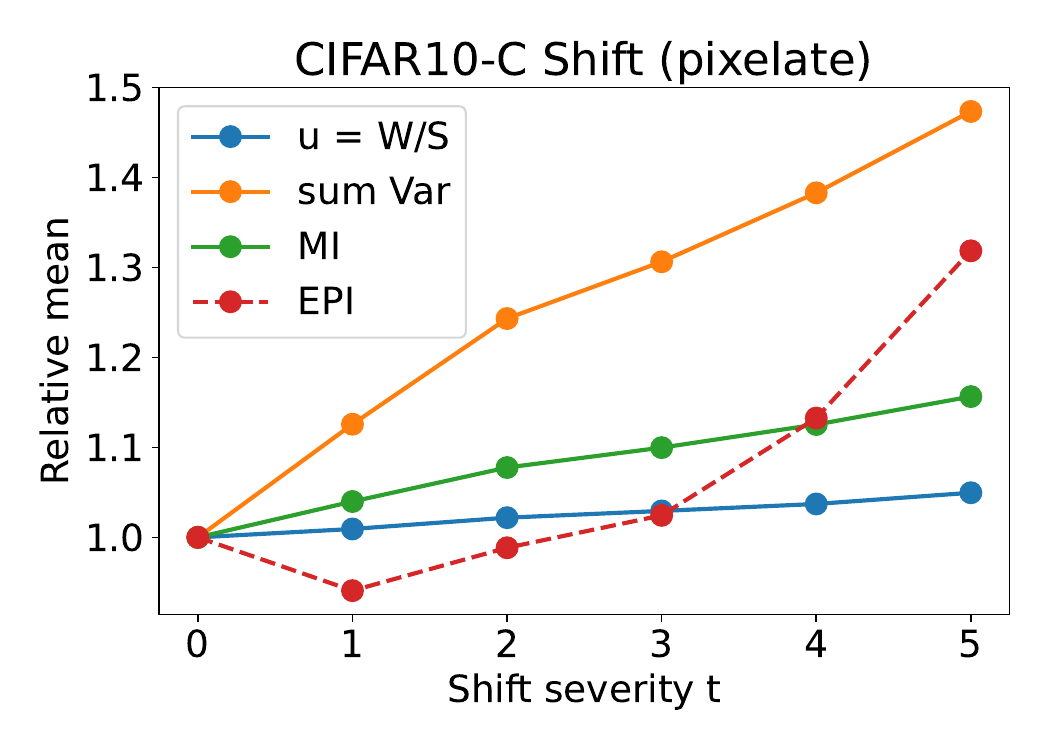}
     &
     \includegraphics[width=0.23\textwidth]{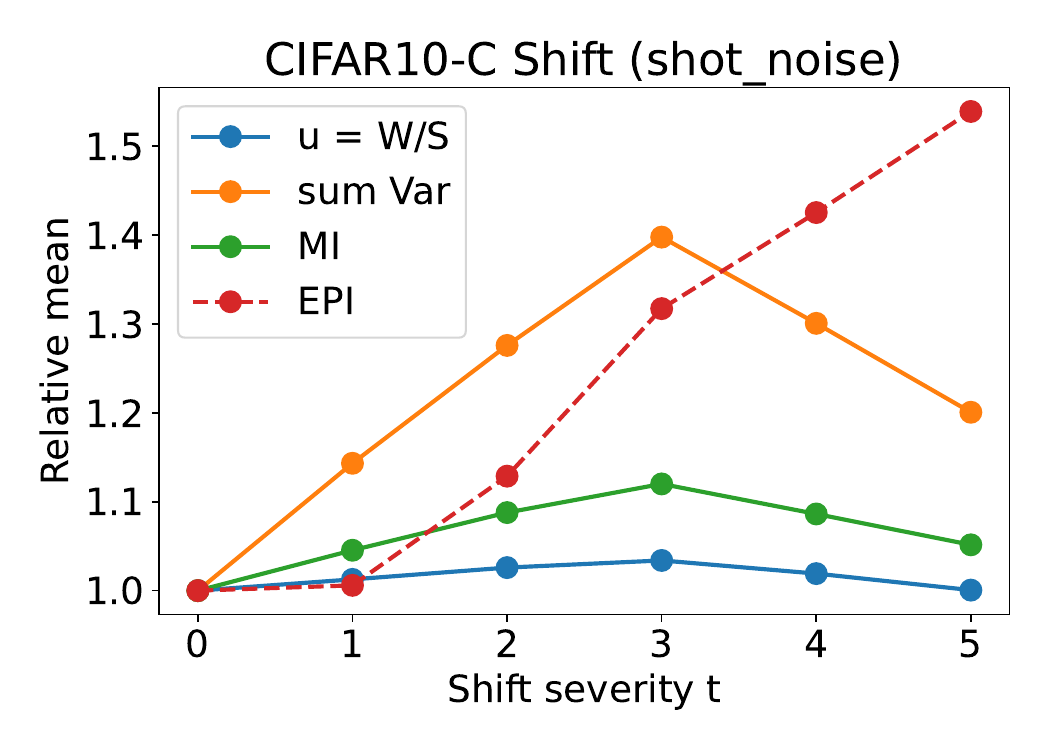}   
     \end{tabular}
     \caption{Empirical comparison of evidential uncertainty with Bayesian distributional uncertainty measures on clean data (top row) and corrupted data from CIFAR-10-C (bottom row). 
     Evidential uncertainty $u = W/S$ closely follows mutual information and predictive variance as $\log S$ or corruption severity increases, while ensemble epistemic uncertainty exhibits distinct trends.}
     \label{fig:uncertainty}
\end{figure*}
\vspace{-1em}

\paragraph{Training objective.}

As established in~Sec.\ref{subsec:loss}, the training objective of evidential learning admits a principled interpretation from a variational inference perspective with a form in Eq.~\eqref{eq:elbo}.
Most works modify the KL term by selectively penalizing only incorrectly classified samples, while preserving the evidence of correctly classified samples, \eg$\hat{\bm{\alpha}}_i = \bm{y}_i + (1-\bm{y}_i)\odot\bm{\alpha}_i$.
These designs are motivated by practical considerations, but break the consistency of posterior regularization and obscure the probabilistic meaning of the resulting objective.

More recently, \cite{C_2023ICML_RED} introduced an additional regularization term that explicitly promotes higher evidence for correctly classified samples, using an uncertainty-dependent coefficient $\lambda_{\mathrm{cor}} = u = W / S$.
While this strategy improves empirical performance, the choice of the weighting scheme is still heuristic and lacks a unified Bayesian justification.

In contrast, we retain the original KL regularization term derived in Eq.~\eqref{eq:loss}, without introducing sample-dependent masking or ad-hoc modifications.

\section{Experiments}
In this section, we present a comprehensive empirical evaluation of Generalized Evidential Deep Learning (GEDL).
Our experiments are designed to address two central questions.
First, we examine whether GEDL, despite being derived from a theoretically grounded Bayesian formulation, achieves competitive performance in standard classification, uncertainty estimation, and out-of-distribution detection when compared with existing evidential and Dirichlet-based methods.
Second, we investigate whether the uncertainty estimated by GEDL is consistent with established notions of distributional uncertainty, thereby empirically supporting our theoretical analysis.
The details about dataset and implementation settings can be found in the Appendix.~\ref{apdx:exp_set}.

\subsection{Classification}

\paragraph{Baselines.}
Following prior work~\cite{C_2023ICLR_IEDL,C_2024ICLR_REDL}, we compare GEDL against a broad set of uncertainty-aware classification methods.
These include Dirichlet-distribution-based approaches, namely PN~\citep{C_2018NIPS_prior_net}, RKL-PN~\citep{C_2019NIPS_RPN}, and PostNet~\citep{C_2020NIPS_post_net}.
We also include evidential learning methods, including vanilla EDL~\citep{C_2018NIPS_EDL}, $\mathcal{I}$-EDL~\citep{C_2023ICLR_IEDL}, and R-EDL~\citep{C_2024ICLR_REDL}.
This selection covers both classical Dirichlet-based uncertainty models and recent evidential variants.
For fairness and reproducibility, all baseline results are obtained from the original articles or by evaluating publicly released checkpoints.
RED~\cite{C_2023ICML_RED} is not included, as no pretrained checkpoints or corresponding evaluation results on uncertainty estimation and OOD detection are provided.

\paragraph{Evaluation metrics.}
We evaluate model performance from three complementary perspectives: classification performance, uncertainty estimation, and OOD detection.
For in-distribution classification, we report standard top-1 accuracy.
To assess uncertainty estimation, we follow established practice in~\cite{C_2023ICLR_IEDL,C_2024ICLR_REDL} and measure confidence quality using {AUPR}.
Specifically, we use two confidence scores: the maximum predictive probability (MP), and the uncertainty mass (UM).
The uncertainty mass in GEDL is defined under the subjective opinion formulation as as $u = W / S$, whereas other approaches compute it from the Dirichlet concentration $S=\sum_k \alpha_k$.
For OOD detection, these confidence scores are used to discriminate in-distribution and out-of-distribution samples, and performance is also evaluated using AUPR, with in-distribution samples labeled as positive and OOD samples as negative.
For MC Dropout and DUQ, which do not rely on Dirichlet distributions, we report results based solely on MP.

\paragraph{Results.}
Table~\ref{tab:results} reports the performance of GEDL and competing methods on both in-distribution classification and out-of-distribution detection.
Overall, GEDL consistently achieves performance that is comparable to or better than existing evidential and Dirichlet-based baselines across all evaluated settings.
For ID classification, GEDL attains the highest accuracy on both MNIST and CIFAR10, demonstrating that enforcing a principled Bayesian formulation does not compromise predictive performance. Instead, the structured treatment of prior, posterior update, and training objective yields models that remain highly competitive in standard classification tasks.

More importantly, GEDL shows clear advantages in uncertainty-based OOD detection. 
In each setting, GEDL achieves the best or second-best performance under both MP and UM metrics. 
The improvements become more pronounced on the more challenging CIFAR10 benchmarks. In particular, for CIFAR10$\rightarrow$SVHN, GEDL outperforms the strongest baseline (R-EDL) by $5.01\%$ in MP and $8.89\%$ in UM, indicating substantially improved separation between in-distribution and out-of-distribution samples.

Overall, GEDL achieves a favorable balance between predictive accuracy and uncertainty quality, providing both strong empirical performance and a clearer theoretical grounding for uncertainty-aware classification.
Ablation studies and parameter sensitivity analysis for $C_w$ and $C_{\tau}$ can be found in Appendix.~\ref{apdx:addition_results}.

\subsection{Evaluation of Distributional Uncertainty}

To empirically validate our theoretical claim that evidential uncertainty in EDL reflects distributional uncertainty, we compare the learned uncertainty mass with two Bayesian measures of distributional uncertainty defined in Eq.~\eqref{eq:dist_MI} and Eq.~\eqref{eq:dist_var_sum}.
The quantitative results are presented in Fig.~\ref{fig:uncertainty}.

In the top row, we illustrate the relationship between different uncertainty measures and the concentration parameter $\log S$ on clean test data, using both rank-based and value-based visualizations (the left two panels correspond to MNIST and the right two to CIFAR-10). 
All three uncertainty measures exhibit highly consistent monotonic trends with respect to $\log S$, indicating that the evidential uncertainty $u = W/S$ exhibits behavior that is closely aligned with Bayesian distributional uncertainty.

In the bottom row, we examine the behavior of these uncertainty measures under increasing corruption severity on CIFAR-10-C~\cite{A_2019_cifar10c}, with snow, brightness, pixelate, and shot noise, respectively.
The three distributional uncertainty metrics show similar growth patterns as the corruption level increases, reflecting increased ambiguity in the latent class probabilities. In contrast, the epistemic uncertainty estimated via ensemble methods (dashed light blue curve) exhibits a qualitatively different trend, highlighting that it captures model uncertainty rather than distributional uncertainty.

Together, these results provide empirical support for our theoretical analysis: the evidential uncertainty mass produced by GEDL aligns with Bayesian notions of distributional uncertainty.

\section{Conclusion}
In this work, we provide a principled Bayesian interpretation of Evidential Deep Learning and show that its core mechanisms can be understood within a generalized Bayesian framework.
Moreover, we demonstrate the uncertainty mass estimated by EDL aligns with Bayesian notions of distributional uncertainty, both theoretically and empirically.

Based on this insight, we propose Generalized Evidential Deep Learning (GEDL), a unified framework that replaces empirical design choices with theoretically grounded constructions. GEDL yields competitive or improved performance in classification, uncertainty calibration, and out-of-distribution detection.

Overall, our work reformulates EDL as a coherent instance of generalized Bayesian inference. We believe this framework opens new directions for principled uncertainty modeling and provides a foundation for designing more reliable and interpretable uncertainty-aware deep learning systems.

\section*{Impact Statement}
This paper presents work whose primary goal is to advance the theoretical understanding of evidential deep learning and uncertainty estimation. Improved uncertainty modeling could contribute to safer and more reliable machine learning systems in applications where confidence assessment is important. We do not foresee direct negative societal consequences specific to this work beyond the general considerations applicable to machine learning research.

\nocite{langley00}

\bibliography{strings,refs}
\bibliographystyle{icml2026}

\newpage
\appendix
\onecolumn
\section{Notations}\label{sec:notation}

\begin{table*}[htbp]
\centering
\caption{Summary of notations used throughout the paper.}
\renewcommand{\arraystretch}{1.2}
\begin{tabular}{llll}
\hline
\textbf{Category} & \textbf{Symbol} & \textbf{Index / Dimension} & \textbf{Description} \\
\hline

Data & $\mathcal{D}$ & - & Dataset \\
Data & $\bm{x}$ & $ \in \mathcal{X}$ & Input sample \\
Data & $y$ & $\in \{1,\dots,K\}$ & Class label \\
Data & $\bm{y}$ & $\in \{0,1\}^K$ & One-hot encoded label vector \\

Latent variable & $\bm{\pi}$ & $\in \Delta^{K-1}$ & Latent categorical probability vector \\
Latent variable & $\pi_k$ & $k=1,\dots,K$ & Probability of class $k$ \\

Model parameter & $\bm{\theta}$ & -- & Neural network parameters \\
Model parameter & $f_{\bm{\theta}}(\cdot)$ & -- & Neural network mapping from input to evidence \\

Dirichlet parameter & $\bm{\alpha}$ & $\in \mathbb{R}_{>0}^K$ & Dirichlet concentration parameters (posterior) \\
Dirichlet parameter & $\alpha_k$ & $k=1,\dots,K$ & $k$-th concentration parameter \\
Dirichlet parameter & $\bm{\alpha}_0$ & $\in \mathbb{R}_{>0}^K$ & Prior Dirichlet concentration parameters \\
Dirichlet parameter & $S$ & scalar & Total concentration: $S=\sum_k \alpha_k$ \\

Evidence & $\bm{e}(\bm{x})$ & $\in \mathbb{R}_{\ge 0}^K$ & Evidence vector predicted by the network \\
Evidence & $E(x)$ & scalar & Evidence strength: $E(x)=\|\bm{e}(x)\|$ \\
Evidence & $\bm{r}(x)$ & $\in \Delta^{K-1}$ & Evidence direction: $\bm{r}(x)=\bm{e}(x)/\|\bm{e}(x)\|$ \\
Uncertainty & $u$ & scalar & Uncertainty mass: $u = W / S(x)$ \\
Belief & $\bm{b}$ & $\in \mathbb{R}_{\ge 0}^{K}$ & Normalized belief mass derived from evidence \\
Belief & $b_k$ & $k=1,\cdots,K$ & Support the $k_{th}$ class \\
Opinion &  $ \bm {D}$ & $\in \mathbb{R}_{\ge 0}^{K+1}$ & Opintion representation $\bm{D}=\{\bm{b},u\}$ \\  

Prior & $\bm{a}$ & $\in \Delta^{K-1}$ & Base rate in subjective logic (prior probability) \\
Prior & $W$ & scalar & Prior strength \\
Prior & $\bm{\alpha}_0 = W\bm{a}$ & $\in \mathbb{R}_{>0}^K$ & Dirichlet prior parameters \\

Posterior & $q_\theta(\bm{\pi}\mid x)$ & -- & Variational posterior distribution \\
Posterior & $q_{\tau}(\bm{\pi}\mid y)$ & -- & Generalized Bayesian posterior \\
Parameter & $\tau$ & scalar & Evidence strength in generalized Bayesian update \\
Parameter & $\lambda$ & scalar & KL regularization weight ($\lambda = 1/\tau$) \\

Functions & $\psi(\cdot)$ & -- & Digamma function \\
Sets & $\Delta^{K-1}$ & -- & Probability simplex in $\mathbb{R}^K$ \\
Sets & $K$ & scalar & Number of classes \\

\hline
\end{tabular}
\label{tab:notation}
\end{table*}

\section{Technical Derivations and Proofs}
\raggedbottom 

\begin{proposition}[Conjugacy of the Dirichlet Distribution]\label{prop:conj_dir}
Let $\bm{\pi} = (\pi_1,\dots,\pi_K)$ be a probability vector on the $K$-simplex, and let the prior distribution of $\bm{\pi}$ be a Dirichlet distribution with parameter $\bm{\alpha}_0 = (\alpha_{0, 1},\dots,\alpha_{0, K})$:
$$
p(\bm{\pi} \mid \bm{\alpha}_0) = \mathrm{Dir}(\bm{\pi} \mid \bm{\alpha}_0)
= \frac{1}{\mathrm{B}(\bm{\alpha}_0)} \prod_{k=1}^K \pi_k^{\alpha_{0, k}-1}.
$$

Suppose that conditional on $\bm{\pi}$, we observe data generated from a categorical (or multinomial) distribution with count vector
$\bm{n} = (n_1,\dots,n_K)$, where $n_k$ denotes the number of observations in category $k$.

Then the posterior distribution of $\bm{\pi}$ is also Dirichlet, with parameter $\bm{\alpha} = \bm{\alpha}_0 + \bm{n}$, that is,
$$
p(\bm{\pi} \mid \bm{n}, \bm{\alpha}_0) = \mathrm{Dir}(\bm{\pi} \mid \alpha_{0, 1} + n_1,\dots,\alpha_{0,K} + n_K).
$$
\end{proposition}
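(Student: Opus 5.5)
The plan is to apply Bayes' rule directly and recognize the kernel of a Dirichlet density. There is nothing to invoke from earlier in the paper beyond the definitions of the Dirichlet density and the categorical likelihood given in Sec.~\ref{subsec:post_update}.

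\textbf{Likelihood.} First I will write the likelihood of the observed data. For $N=\sum_k n_k$ conditionally i.i.d.\ categorical draws with counts $\bm{n}$, the probability of any specific sequence is
$$
\prod_{k=1}^K \pi_k^{n_k}.
$$
Under the multinomial formulation it is this product multiplied by the multinomial coefficient $N!/\prod_k n_k!$. In either case the likelihood, as a function of $\bm{\pi}$, is proportional to $\prod_k \pi_k^{n_k}$, because the coefficient does not depend on $\bm{\pi}$.

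\textbf{Posterior kernel.} Next I will apply Bayes' rule on the simplex:
$$
p(\bm{\pi}\mid \bm{n},\bm{\alpha}_0) = \frac{p(\bm{n}\mid\bm{\pi})\,p(\bm{\pi}\mid\bm{\alpha}_0)}{\int_{\Delta^{K-1}} p(\bm{n}\mid\bm{\pi}')\,p(\bm{\pi}'\mid\bm{\alpha}_0)\,\mathrm{d}\bm{\pi}'}.
$$
Multiplying the two kernels gives
$$
p(\bm{\pi}\mid \bm{n},\bm{\alpha}_0) \propto \prod_{k=1}^K \pi_k^{\alpha_{0,k}+n_k-1}.
$$
This is exactly the unnormalized density of $\mathrm{Dir}(\bm{\alpha}_0+\bm{n})$. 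Its parameters are valid, since $\alpha_{0,k}+n_k>0$.

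\textbf{Normalization.} The only step requiring care is that the normalizing constant is finite and equals the correct Dirichlet normalizer. I will use the standard Dirichlet integral identity
$$
\int_{\Delta^{K-1}}\prod_k \pi_k^{\beta_k-1}\,\mathrm{d}\bm{\pi}=\mathrm{B}(\bm{\beta})=\frac{\prod_k\Gamma(\beta_k)}{\Gamma\!\left(\sum_k\beta_k\right)}, \qquad \beta_k>0,
$$
applied with $\bm{\beta}=\bm{\alpha}_0+\bm{n}$. The evidence is then
$$
p(\bm{n}\mid\bm{\alpha}_0)=c\,\mathrm{B}(\bm{\alpha}_0+\bm{n})/\mathrm{B}(\bm{\alpha}_0),
$$
where $c$ is the multinomial coefficient, or $c=1$ for the sequence likelihood. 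Dividing the joint density by this evidence cancels both $c$ and $\mathrm{B}(\bm{\alpha}_0)$. The result is
$$
\frac{1}{\mathrm{B}(\bm{\alpha}_0+\bm{n})}\prod_k\pi_k^{\alpha_{0,k}+n_k-1},
$$
which establishes the claim.

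I expect no real obstacle. The one point to handle explicitly is that the density is taken with respect to Lebesgue measure on the first $K-1$ coordinates of the simplex. Once that is fixed, the proportionality argument above is rigorous, because two probability densities that are proportional must be equal.
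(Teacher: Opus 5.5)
Your proposal is correct and follows the paper's route. You apply Bayes' rule, observe that the categorical/multinomial likelihood is proportional to $\prod_k \pi_k^{n_k}$, and recognize $\prod_k \pi_k^{\alpha_{0,k}+n_k-1}$ as the kernel of $\mathrm{Dir}(\bm{\alpha}_0+\bm{n})$. Your explicit normalization via $\mathrm{B}(\bm{\alpha}_0+\bm{n})$ is a harmless addition that the paper leaves implicit here and writes out only for the tempered case in Proposition~\ref{prop:gen_conj_dir}.
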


\begin{proof}
By Bayes' theorem,
$$
p(\bm{\pi} \mid \bm{n}, \bm{\alpha}_0) \propto p(\bm{n} \mid \bm{\pi})\, p(\bm{\pi} \mid \bm{\alpha}_0).
$$

The likelihood of the data (up to a normalization constant independent of $\bm{\pi}$) is
$$
p(\bm{n} \mid \bm{\pi}) \propto \prod_{k=1}^K \pi_k^{n_k}.
$$

Substituting the likelihood and the prior, we obtain
$$
p(\bm{\pi} \mid \bm{n}, \bm{\alpha}_0)
\propto
\left( \prod_{k=1}^K \pi_k^{n_k} \right)
\left( \prod_{k=1}^K \pi_k^{\alpha_{0, k}-1} \right).
$$

Combining terms yields
$$
p(\bm{\pi} \mid \bm{n}, \bm{\alpha}_0)
\propto
\prod_{k=1}^K \pi_k^{\alpha_{0, k} + n_k - 1},
$$
which is the kernel of a Dirichlet distribution with parameter
$\bm{\alpha} = \bm{\alpha}_0 + \bm{n}$.

\end{proof}

\begin{proposition}[Conjugacy of the Dirichlet Distribution in Generalized Bayesian Inference]
\label{prop:gen_conj_dir}

Consider a $K$-class categorical observation represented by a one-hot vector
$\bm{y}=(y_1,\ldots,y_K)$, where $y_k\in\{0,1\}$ and $\sum_{k=1}^K y_k=1$.
Let the latent class probability vector be $\bm{\pi}=(\pi_1,\ldots,\pi_K)\in\Delta^{K-1}$,
and assume a Dirichlet prior
$$
p(\bm{\pi})=\mathrm{Dir}(\bm{\pi}\mid\bm{\alpha}_0), \qquad 
\bm{\alpha}_0\in\mathbb{R}_{>0}^K.
$$
Under the tempered generalized Bayesian update with temperature $\tau>0$, the posterior is
$$
q_\tau(\bm{\pi}\mid \bm{y}) =\frac{p(\bm{y}\mid\bm{\pi})^{\tau}\,p(\bm{\pi})}{\int_{\Delta^{K-1}} p(\bm{y}\mid\bm{\pi})^{\tau}\,p(\bm{\pi})\,\mathrm{d}\bm{\pi}}= \mathrm{Dir}\big(\bm{\pi}\mid \bm{\alpha}_0 + \tau \bm{y}\big).
$$

More generally, for a multinomial count vector $\bm{n}\in\mathbb{R}_{\ge 0}^K$,
the tempered posterior is
$$
q_\tau(\bm{\pi} \mid \bm{n})
= \mathrm{Dir}\big(\bm{\pi}\mid \bm{\alpha}_0 + \tau \bm{n}\big).
$$
\end{proposition}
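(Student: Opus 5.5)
The argument follows the kernel-matching route used for Proposition~\ref{prop:conj_dir}, applied to the tempered likelihood. For a one-hot observation the categorical likelihood is $p(\bm{y}\mid\bm{\pi})=\prod_{k}\pi_k^{y_k}$. Raising it to the power $\tau>0$ gives $p(\bm{y}\mid\bm{\pi})^{\tau}=\prod_k \pi_k^{\tau y_k}$, which is well defined and nonnegative on $\Delta^{K-1}$. Multiplying by the prior density $\mathrm{B}(\bm{\alpha}_0)^{-1}\prod_k \pi_k^{\alpha_{0,k}-1}$ and collecting exponents yields the unnormalized density
\[
p(\bm{y}\mid\bm{\pi})^{\tau}p(\bm{\pi})=\frac{1}{\mathrm{B}(\bm{\alpha}_0)}\prod_{k=1}^K \pi_k^{\alpha_{0,k}+\tau y_k-1}.
\]
This is the kernel of a Dirichlet density with parameter $\bm{\alpha}_0+\tau\bm{y}$.

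Next I would check the normalizer explicitly rather than only asserting proportionality, since this is the one point where tempering could in principle cause trouble. Every component of $\bm{\alpha}_0+\tau\bm{y}$ is at least $\alpha_{0,k}>0$, because $\tau y_k\ge 0$. By the Dirichlet integral identity $\int_{\Delta^{K-1}}\prod_k\pi_k^{\beta_k-1}\,\mathrm{d}\bm{\pi}=\mathrm{B}(\bm{\beta})$ for $\bm{\beta}\in\mathbb{R}^K_{>0}$, the denominator therefore equals $\mathrm{B}(\bm{\alpha}_0+\tau\bm{y})/\mathrm{B}(\bm{\alpha}_0)$, which is finite and positive. Dividing, the prior constant $\mathrm{B}(\bm{\alpha}_0)$ cancels and we obtain exactly $\mathrm{B}(\bm{\alpha}_0+\tau\bm{y})^{-1}\prod_k\pi_k^{\alpha_{0,k}+\tau y_k-1}=\mathrm{Dir}(\bm{\pi}\mid\bm{\alpha}_0+\tau\bm{y})$.

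For the count-vector case, I would take the likelihood up to a $\bm{\pi}$-independent factor, $p(\bm{n}\mid\bm{\pi})\propto\prod_k\pi_k^{n_k}$. The multinomial coefficient is raised to the power $\tau$ as well, but it remains a constant in $\bm{\pi}$ and cancels in the normalization. Real-valued $\bm{n}\in\mathbb{R}^K_{\ge 0}$ are allowed because only the kernel form matters. The identical computation then gives exponents $\alpha_{0,k}+\tau n_k-1$ with all parameters strictly positive, so the tempered posterior is $\mathrm{Dir}(\bm{\alpha}_0+\tau\bm{n})$.

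The main obstacle, though a mild one, is justifying that the tempered posterior is a proper distribution. This requires positivity of the updated parameters and hence integrability, which relies on $\tau>0$ and $\bm{n}\ge 0$. It is also the reason for carrying the explicit Beta-function normalizer instead of arguing by proportionality alone.
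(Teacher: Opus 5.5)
Your proposal is correct and follows essentially the same route as the paper: multiply the tempered likelihood $\prod_k \pi_k^{\tau y_k}$ by the Dirichlet prior, evaluate the normalizer as $\mathrm{B}(\bm{\alpha}_0+\tau\bm{y})/\mathrm{B}(\bm{\alpha}_0)$, cancel, and repeat with $\bm{n}$ in place of $\bm{y}$. The paper leaves implicit that every updated parameter is strictly positive, so the Beta integral is finite, and that the multinomial coefficient cancels; your explicit checks of both points are small but welcome additions.
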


\begin{proof}
We start from the generalized Bayesian posterior definition with temperature $\tau>0$:
\begin{equation}
q_\tau(\bm{\pi}\mid \bm{y})
= \frac{p(\bm{y}\mid\bm{\pi})^{\tau}\,p(\bm{\pi})}{\int_{\Delta^{K-1}} p(\bm{y}\mid\bm{\pi})^{\tau}\,p(\bm{\pi})\,d\bm{\pi}}.
\label{eq:gen_bayes_def}
\end{equation}

For a categorical observation encoded as a one-hot vector $\bm{y}$, the power $\tau$ gives likelihood term as:
\begin{equation*}
p(\bm{y}\mid\bm{\pi})^{\tau}
=\left(\prod_{k=1}^K \pi_k^{y_k}\right)^{\tau}
=\prod_{k=1}^K \pi_k^{\tau y_k}.
\end{equation*}

For the prior term, the Dirichlet density is
\begin{equation*}
p(\bm{\pi})
=\mathrm{Dir}(\bm{\pi}\mid \bm{\alpha}_0)=\frac{\prod_{k=1}^K \Gamma(\alpha_{0,k})}{\Gamma\!\left(\sum_{k=1}^K \alpha_{0,k}\right)}\prod_{k=1}^K \pi_k^{\alpha_{0,k}-1}
=\frac{1}{\mathrm{B}(\bm{\alpha}_0)}\prod_{k=1}^K \pi_k^{\alpha_{0,k}-1},
\end{equation*}
where $\mathrm{B}(\bm{\alpha}_0)$ is the multivariate Beta function.

Therefore,
\begin{align}
q_\tau(\bm{\pi}\mid \bm{y}) 
&=\frac{\frac 1 {\mathrm{B}(\bm{\alpha}_0)}\prod_{k=1}^K \pi_k^{\alpha_{0,k}+\tau y_k-1}}{\frac 1 {\mathrm{B}(\bm{\alpha}_0)}\int_{\Delta^{K-1}} \prod_{k=1}^K \pi_k^{\alpha_{0,k}+\tau y_k-1}\,\mathrm{d}\bm{\pi}}\\
&=\frac{\frac 1 {\mathrm{B}(\bm{\alpha}_0)}\prod_{k=1}^K \pi_k^{\alpha_{0,k}+\tau y_k-1}}{\frac 1 {\mathrm{B}(\bm{\alpha}_0)}\mathrm{B}(\bm{\alpha}_0+\tau \bm{y})}\\
&=\frac{1}{\mathrm{B}(\bm{\alpha}_0+\tau \bm{y})} \prod_{k=1}^K \pi_k^{\alpha_{0,k}+\tau y_k-1},
\end{align}
which is exactly the form of $\mathrm{Dir}\big(\bm{\pi}\mid \bm{\alpha}_0 + \tau \bm{y}\big)$.

Similarly, for a count vector $\bm{n} \in \mathbb{R}_{\ge 0}^K$, where $\sum_k n_k = N$, the multinomial likelihood is given by 
\[
p(\bm{n} \mid \bm{\pi}) \propto \prod_{k=1}^K \pi_k^{n_k}.
\]
Raising it to $\tau$ gives $\prod_k \pi_k^{\tau n_k}$. Repeating the same steps and replacing $\tau \bm{y}$ with $\tau \bm{n}$, we get
$q_\tau(\bm{\pi} \mid \bm{n}) = \mathrm{Dir}(\bm{\alpha}_0 + \tau \bm{n})$.
\end{proof}

\begin{proposition}[Expectations of Dirichlet Distribution]\label{prop:expectation}
Let  $ \psi(x) = \frac{d}{dx} \log \Gamma(x) $ denote the digamma function. For a parameter vector $\bm{\alpha} = (\alpha_1, \dots, \alpha_K) \in \mathbb{R}_{>0}^K$, define $S = \sum_{k=1}^K \alpha_k$. For a Dirichlet distribution $ p(\bm{\pi}) = \mathrm{Dir}(\bm{\pi} \mid \bm{\alpha}) $  with parameters $\bm{\alpha} \in \mathbb{R}_{>0}^K$, the following identities hold:
\begin{itemize}
    \item $\mathbb{E}[\pi_k] = \frac{\alpha_k}{S}$
    \item $\mathrm{Var}[\pi_k] = \frac{\alpha_k(S-\alpha_k)}{S^2(S+1)}$
    \item $\mathbb{E}[\log \pi_k] = \psi(\alpha_k)-\psi(S)$
    \item $\mathbb{E}[\pi_k\log \pi_k] = \frac{\alpha_k}{S}\left(\psi(\alpha_k+1)-\psi(S+1)\right)$
\end{itemize}
\end{proposition}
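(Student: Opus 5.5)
All four identities follow from one device: the normalization of the Dirichlet density, $\int_{\Delta^{K-1}}\prod_k \pi_k^{\alpha_k-1}\,\mathrm{d}\bm{\pi}=\mathrm{B}(\bm{\alpha})=\prod_k\Gamma(\alpha_k)/\Gamma(S)$, which is the same fact used in the proof of Proposition~\ref{prop:gen_conj_dir}. Let $\bm{e}_k$ denote the $k$-th standard basis vector. Multiplying the density by $\pi_k$ just shifts $\alpha_k\mapsto\alpha_k+1$, so
\[
\mathbb{E}[\pi_k]=\frac{\mathrm{B}(\bm{\alpha}+\bm{e}_k)}{\mathrm{B}(\bm{\alpha})}=\frac{\Gamma(\alpha_k+1)\Gamma(S)}{\Gamma(\alpha_k)\Gamma(S+1)}=\frac{\alpha_k}{S},
\]
using $\Gamma(z+1)=z\Gamma(z)$. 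Multiplying by $\pi_k^2$ shifts $\alpha_k\mapsto\alpha_k+2$, which gives
\[
\mathbb{E}[\pi_k^2]=\frac{\alpha_k(\alpha_k+1)}{S(S+1)}.
\]
Subtracting $\mathbb{E}[\pi_k]^2$ and putting the result over the common denominator $S^2(S+1)$, the numerator is $S\alpha_k(\alpha_k+1)-\alpha_k^2(S+1)=\alpha_k(S-\alpha_k)$. This yields the variance formula.

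For the logarithmic moment, I will differentiate the normalization identity with respect to $\alpha_k$. Since $\partial_{\alpha_k}\pi_k^{\alpha_k-1}=\pi_k^{\alpha_k-1}\log\pi_k$, differentiating under the integral gives
\[
\int \log\pi_k\prod_j\pi_j^{\alpha_j-1}\,\mathrm{d}\bm{\pi}=\partial_{\alpha_k}\mathrm{B}(\bm{\alpha}).
\]
Dividing by $\mathrm{B}(\bm{\alpha})$ turns this into
\[
\mathbb{E}[\log\pi_k]=\partial_{\alpha_k}\log\mathrm{B}(\bm{\alpha})=\psi(\alpha_k)-\psi(S),
\]
because $\log\mathrm{B}=\sum_j\log\Gamma(\alpha_j)-\log\Gamma(S)$ and $\partial_{\alpha_k}S=1$.

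The step that needs genuine justification is the interchange of derivative and integral. I would handle it by dominated convergence. On a neighborhood $[\alpha_k-\delta,\alpha_k+\delta]$ with $\delta<\alpha_k$, the integrand $|\log\pi_k|\,\pi_k^{\alpha_k-1}$ is dominated by $C\,\pi_k^{\alpha_k-\delta-1-\epsilon}$, since $|\log x|\le C_\epsilon x^{-\epsilon}$ on $(0,1]$. Choosing $\epsilon$ small enough keeps the exponent above $-1$, so this bound is integrable on the simplex.

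For the last identity I reduce to the third one by a change of measure rather than computing a new integral. We have $\mathbb{E}_{\bm{\alpha}}[\pi_k\log\pi_k]=\mathbb{E}_{\bm{\alpha}}[\pi_k]\cdot\mathbb{E}_{\bm{\alpha}+\bm{e}_k}[\log\pi_k]$. The reason is that $\pi_k\,\mathrm{Dir}(\bm{\pi}\mid\bm{\alpha})/\mathbb{E}[\pi_k]$ is exactly the density $\mathrm{Dir}(\bm{\pi}\mid\bm{\alpha}+\bm{e}_k)$, by the same Beta-function ratio used for the first identity. Applying the third identity under the shifted parameters (with total concentration $S+1$) gives
\[
\mathbb{E}[\pi_k\log\pi_k]=\frac{\alpha_k}{S}\bigl(\psi(\alpha_k+1)-\psi(S+1)\bigr),
\]
as claimed.
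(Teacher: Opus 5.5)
Your proposal is correct and follows essentially the same route as the paper: the Beta-function shift $\alpha_k\mapsto\alpha_k+1$ (and $+2$) for the first two moments, differentiation of $\mathrm{B}(\bm{\alpha})$ under the integral for $\mathbb{E}[\log\pi_k]$, and the change of measure to the Dirichlet with $\alpha_k$ replaced by $\alpha_k+1$ for $\mathbb{E}[\pi_k\log\pi_k]$. Your only addition is the dominated-convergence justification for the interchange, which the paper omits. That argument is valid provided the dominating function keeps the other factors $\prod_{j\neq k}\pi_j^{\alpha_j-1}$, since these can be unbounded when some $\alpha_j<1$; with them included, the bound is an integrable Dirichlet kernel with $\alpha_k$ replaced by $\alpha_k-\delta-\epsilon>0$.
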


\begin{proof}
First, 
\begin{equation}\label{eq:expectp}
\begin{split}
    \mathbb{E}[\pi_k]&=\int_{\Delta^{K-1}} \pi_k\frac{\Gamma(S)}{\prod_{i=1}^{K} \Gamma(\alpha_i)} \prod_{i=1}^{K} \pi_i^{\alpha_i-1}\;\mathrm{d} \bm{\pi}
    \\&=\frac{\alpha_k}{S}\int_{\Delta^{K-1}} \frac{\Gamma(S+1)}{\Gamma(\alpha_k+1)\prod_{i\neq k}\Gamma(\alpha_i)} \pi_k^{\alpha_k}\prod_{i\neq k} \pi_i^{\alpha_i-1}\;\mathrm{d} \bm{\pi}
    \\&=\frac{\alpha_k}{S}.
\end{split}  
\end{equation}

For variance, using the same trick of expectation of $\pi_k$ we can get 
$$\mathbb E[\pi_k^2]=\frac{\alpha_k(\alpha_k+1)}{S(S+1)}.$$
Thus:
\begin{equation}\label{eq:expectvar}
{\rm Var}[\pi_k]=\mathbb E[\pi_k^2]-\mathbb E[\pi_k]^2=\frac{\alpha_k(\alpha_k+1)}{S(S+1)}-\frac{\alpha_k^2}{S^2}=\frac{\alpha_k(S-\alpha_k)}{S^2(S+1)} .   
\end{equation}

Then for the log-term, we have:
$$
\mathrm{B}(\bm{\alpha})=\int_{\Delta^{K-1}} \exp{\sum_{i=1}^K (\alpha_i-1)\log\pi_i}\;\mathrm{d}\bm{\pi}.
$$    
Differentiating both sides and multiplying by $\frac 1{{\rm}B(\bm{\alpha})}$ yields:
\begin{equation*}
\begin{split}
\frac{\partial}{\partial \alpha_k} \log \mathrm{B}&=
\frac 1{\mathrm{B}(\bm{\alpha})}\frac{\partial \mathrm{B}(\bm{\alpha})}{\partial \alpha_k}
\\&=\int_{\Delta^{K-1}} \frac 1 {\mathrm{B}(\bm{\alpha})}\log\pi_k\exp{\sum_{i=1}^K (\alpha_i-1)\log\pi_i}\;\mathrm{d}\bm{\pi}
\\&=\mathbb{E}_{\mathrm{Dir}(\bm{\pi} \mid \bm{\alpha})}[\log \pi_k].
\end{split}
\end{equation*}

Combined with $\log\mathrm{B}(\bm{\alpha})=\sum_{i=1}^{K} \log\Gamma(\alpha_i)-\log \Gamma(S) $, we can get $\frac{\partial}{\partial \alpha_k} \log \mathrm{B}(\bm{\alpha}) = \psi(\alpha_k) - \psi(S).$ Therefore, 
\begin{equation}\label{eq:expectlog}
  \mathbb{E}[\log \pi_k]=\frac{\partial \log \mathrm{B}(\bm{\alpha})}{\partial \alpha_k} = \psi(\alpha_k)-\psi(S).  
\end{equation}

Finally for $\mathbb E[\pi_k\log\pi_k]$:
\begin{equation}\label{eq:expecth}
    \begin{split}
        \mathbb E[\pi_k\log\pi_k]&=\int_{\Delta^{K-1}} \pi_k\log\pi_k\frac{\Gamma(S)}{\prod_{i=1}^{K} \Gamma(\alpha_i)} \prod_{i=1}^{K} \pi_i^{\alpha_i-1}\;\mathrm{d} \bm{\pi}\\
        &=\frac{\alpha_k}{S}\int_{\Delta^{K-1}} \log\pi_k\frac{\Gamma(S+1)}{\Gamma(\alpha_k+1)\prod_{i\neq k} \Gamma(\alpha_i)} \pi_k^{\alpha_k}\prod_{i\neq k} \pi_i^{\alpha_i-1}\;\mathrm{d} \bm{\pi}\\
        &=\frac{\alpha_k}{S}\left(\psi(\alpha_k+1)-\psi(S+1)\right).
    \end{split}
\end{equation}

\end{proof}

\begin{proposition}[Detailed derivation of the training objective]\label{prop:elbo}
Let  $q_{\theta}(\bm{\pi} \mid \bm{x})$  be a variational posterior distribution of $\bm{\pi}$ ,  $p(\bm{\pi})$ be its prior distribution, and $q_\tau(\bm{\pi} \mid \bm{y})$ be its tempered posterior, which is defined as
$$
q_\tau(\bm{\pi} \mid \bm{y}) = \frac{1}{Z} \, p(\bm{\pi}) \, p(\bm{y} \mid \bm{\pi})^\tau,
$$ 
where $\tau > 0$ is a scaling parameter and $Z = \int p(\bm{\pi}) p(\bm{y} \mid \bm{\pi})^\tau \,\mathrm{d}\bm{\pi}$ is the normalizing constant. Then, the KL divergence between $q_{\theta}(\bm{\pi}\mid \bm{x})$  and $q_\tau(\bm{\pi} \mid \bm{y})$ decomposes as:
\begin{equation*}
\begin{split}
    \operatorname{KL}\!\left(
        q_{\theta}(\bm{\pi}\mid \bm{x})
        \;\|\;
        q_\tau(\bm{\pi} \mid \bm{y})
    \right)
    &= \operatorname{KL}\!\left(q_{\theta}(\bm{\pi}\mid \bm{x})\;\|\;p(\bm{\pi})\right) \\
    &\quad + \tau\,\mathbb{E}_{q_\theta(\bm{\pi}\mid \bm{x})}\!\big[-\log p(\bm{y}\mid \bm{\pi})\big] + \log Z.
\end{split}
\end{equation*}
Moreover, since $Z$ does not depend on the variational parameters $\theta$, it is a constant with respect to the optimization objective.
\end{proposition}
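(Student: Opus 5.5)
The plan is to expand the KL divergence directly from its definition and substitute the explicit form of the tempered posterior. Writing $q_\theta$ for $q_{\theta}(\bm{\pi}\mid \bm{x})$, we have
\begin{equation*}
\operatorname{KL}(q_\theta \,\|\, q_\tau) = \mathbb{E}_{q_\theta}\big[\log q_\theta(\bm{\pi}\mid\bm{x}) - \log q_\tau(\bm{\pi}\mid\bm{y})\big].
\end{equation*}
Taking logarithms of the defining relation $q_\tau(\bm{\pi}\mid\bm{y}) = Z^{-1} p(\bm{\pi})\, p(\bm{y}\mid\bm{\pi})^\tau$ gives
\begin{equation*}
\log q_\tau(\bm{\pi}\mid\bm{y}) = \log p(\bm{\pi}) + \tau \log p(\bm{y}\mid\bm{\pi}) - \log Z .
\end{equation*}
Substituting this into the expectation splits the integrand into three pieces.

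First, the pieces $\log q_\theta - \log p(\bm{\pi})$ combine into $\operatorname{KL}(q_\theta\,\|\,p(\bm{\pi}))$. Second, the term $-\tau\log p(\bm{y}\mid\bm{\pi})$ gives $\tau\,\mathbb{E}_{q_\theta}[-\log p(\bm{y}\mid\bm{\pi})]$ by linearity of expectation. Third, the term $+\log Z$ is constant in $\bm{\pi}$, so its expectation under the normalized density $q_\theta$ is just $\log Z$. Adding the three pieces gives exactly the stated decomposition. The "moreover" claim follows because $Z=\int p(\bm{\pi})p(\bm{y}\mid\bm{\pi})^\tau\,\mathrm{d}\bm{\pi}$ involves only the prior, the label $\bm{y}$, and $\tau$, and never $\theta$.

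The only step needing care is justifying the splitting of the expectation. Each of the three expectations must be finite, so that linearity applies and no $\infty-\infty$ arises. I would settle this using the Dirichlet setting of the paper, where $q_\theta=\mathrm{Dir}(\bm{\alpha})$ with $\bm{\alpha}\in\mathbb{R}^K_{>0}$ and $p(\bm{y}\mid\bm{\pi})=\prod_k\pi_k^{y_k}$. All the log-densities involved are then linear combinations of $\log\pi_k$ plus constants. By Proposition~\ref{prop:expectation}, $\mathbb{E}[\log\pi_k]=\psi(\alpha_k)-\psi(S)$ is finite, which gives the needed integrability. In the fully general statement I would simply assume these expectations are finite. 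I would also note that $Z$ is finite and positive, because $p(\bm{y}\mid\bm{\pi})^\tau\le 1$ and the prior is a density; indeed $Z=\mathrm{B}(\bm{\alpha}_0+\tau\bm{y})/\mathrm{B}(\bm{\alpha}_0)$ by the computation in Proposition~\ref{prop:gen_conj_dir}. This guarantees that $\log Z$ is well defined.
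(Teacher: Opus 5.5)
Your proposal is correct and follows the paper's proof: expand the KL by definition, substitute $\log q_\tau = \log p(\bm{\pi}) + \tau\log p(\bm{y}\mid\bm{\pi}) - \log Z$, and split by linearity to obtain the three terms, with $Z$ independent of $\theta$. Your added justification, in the Dirichlet setting, that each expectation is finite and that $Z=\mathrm{B}(\bm{\alpha}_0+\tau\bm{y})/\mathrm{B}(\bm{\alpha}_0)$ is a careful refinement the paper leaves implicit.
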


\begin{proof}
By definition of the KL divergence, 
$$
\operatorname{KL}\!\left(q_{\theta} \,\|\, q_\tau \right) 
= \mathbb{E}_{q_{\theta}} \left[ \log \frac{q_{\theta}(\bm{\pi} \mid \bm{x})}{q_\tau(\bm{\pi} \mid \bm{y})} \right].
$$

Substituting the expression for  $ q_\tau(\bm{\pi} \mid \bm{y}) $ , we obtain
$$
\begin{aligned}
\operatorname{KL}\!\left(q_{\theta} \,\|\, q_\tau \right) 
&= \mathbb{E}_{q_{\theta}} \left[ \log q_{\theta}(\bm{\pi} \mid \bm{x}) - \log p(\bm{\pi}) - \tau \log p(\bm{y} \mid \bm{\pi}) + \log Z \right] \\
&= \underbrace{\mathbb{E}_{q_{\theta}} \left[ \log \frac{q_{\theta}(\bm{\pi} \mid \bm{x})}{p(\bm{\pi})} \right]}_{\operatorname{KL}(q_{\theta} \| p)} 
+ \tau \, \mathbb{E}_{q_{\theta}} \left[ -\log p(\bm{y} \mid \bm{\pi}) \right] 
+ \log Z,
\end{aligned}
$$ 
which yields the desired result. Since  $ Z $  depends only on the data $ \bm{y} $ , the prior  $ p(\bm{\pi}) $ , and the fixed scale  $ \tau $ , it is independent of the variational parameters  $ \theta $  and can be omitted during optimization. The final objective, after scaling the KL divergence by $1/\tau$, is the variational loss:
\[
\mathcal{L}_{\mathrm{VI}} = \mathbb{E}_{q_\theta(\bm{\pi} \mid \bm{x})}\!\big[-\log p(\bm{y} \mid \bm{\pi})\big] + \frac{1}{\tau} \operatorname{KL}\!\big(q_\theta(\bm{\pi} \mid \bm{x}) \;\|\; p(\bm{\pi})\big).
\]
\end{proof}

\begin{proposition}[Closed-form expression of the training objective in the Dirichlet distribution]\label{prop:closed-form}
Let $\bm{y}$ be a one-hot vector. Denote $p(\bm{y} \mid \bm{\pi}) = \pi_y;\, \bm{\pi} \sim \mathrm{Dir}(\bm{\pi}\mid\bm{\alpha});\, S=\sum \alpha_k$, and $\alpha_y$ denote the component of $\bm{\alpha}$ corresponding to the non-zero entry of the one-hot vector $\bm{y}$. Then we have
$$
\mathbb{E}_{\bm{\pi}}[-\log p(y \mid \bm{\pi})] = \psi(S) -\psi(\alpha_y).
$$ 

As for the Kullback-Leibler Divergence term. Let $q(\bm{\pi}) = \mathrm{Dir}(\bm{\pi} \mid \bm{\alpha})$ and $p(\bm{\pi}) = \mathrm{Dir}(\bm{\pi} \mid \bm{\alpha}_0)$ be two Dirichlet distributions with parameters  $ \bm{\alpha}, \bm{\alpha}_0 \in \mathbb{R}_{>0}^K $ . Then we have
\begin{equation*}
\operatorname{KL}(q \,\|\, p) 
= \log \frac{\mathrm{B}(\bm{\alpha}_0)}{\mathrm{B}(\bm{\alpha})}
+ \sum_{k=1}^K (\alpha_k - \alpha_{0,k}) \big( \psi(\alpha_k) - \psi(S) \big),
\end{equation*}
where $\mathrm{B}(\bm{\alpha}) = \frac{\prod_{k=1}^K \Gamma(\alpha_k)}{\Gamma(S)}$  is the multivariate Beta function.

\end{proposition}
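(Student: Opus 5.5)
Both identities reduce to the expectations already established in Proposition~\ref{prop:expectation}, so the plan is to evaluate each expectation term by term rather than compute any new integrals.

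\textbf{Likelihood term.} Since $\bm{y}$ is one-hot, $p(\bm{y}\mid\bm{\pi})=\prod_k \pi_k^{y_k}=\pi_y$. Hence $-\log p(\bm{y}\mid\bm{\pi})=-\log\pi_y$. Taking the expectation under $\mathrm{Dir}(\bm{\pi}\mid\bm{\alpha})$ and invoking the identity $\mathbb{E}[\log\pi_k]=\psi(\alpha_k)-\psi(S)$ from Proposition~\ref{prop:expectation} with $k=y$ gives $\psi(S)-\psi(\alpha_y)$ directly.

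\textbf{KL term.} I will write
\[
\operatorname{KL}(q\,\|\,p)=\mathbb{E}_q[\log q(\bm{\pi})]-\mathbb{E}_q[\log p(\bm{\pi})]
\]
and insert the explicit Dirichlet log-densities
\[
\log q(\bm{\pi})=-\log\mathrm{B}(\bm{\alpha})+\sum_k(\alpha_k-1)\log\pi_k,
\qquad
\log p(\bm{\pi})=-\log\mathrm{B}(\bm{\alpha}_0)+\sum_k(\alpha_{0,k}-1)\log\pi_k .
\]
Taking the difference, the normalizers combine into $\log\frac{\mathrm{B}(\bm{\alpha}_0)}{\mathrm{B}(\bm{\alpha})}$. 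The $-1$ offsets in the exponents cancel, leaving $\sum_k(\alpha_k-\alpha_{0,k})\log\pi_k$. By linearity of expectation, each $\log\pi_k$ is then replaced with $\psi(\alpha_k)-\psi(S)$, using Proposition~\ref{prop:expectation} again. It is essential that $S$ here is the total concentration of $q$, not of $p$, because the expectation is taken under $q$.

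There is no real obstacle. The only points needing care are these:
\begin{itemize}
\item keep track of which distribution the expectation is taken under, so that $\psi(S)$ rather than $\psi(S_0)$ appears;
\item keep the sign convention so that the ratio comes out as $\mathrm{B}(\bm{\alpha}_0)/\mathrm{B}(\bm{\alpha})$;
\item note that all expectations are finite because $\bm{\alpha}\in\mathbb{R}^K_{>0}$, which justifies applying linearity.
\end{itemize}
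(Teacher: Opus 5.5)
Your proposal is correct and follows essentially the same route as the paper: the likelihood term comes from the identity $\mathbb{E}[\log \pi_k]=\psi(\alpha_k)-\psi(S)$ of Proposition~\ref{prop:expectation} with $k=y$. The KL term is obtained the same way, by subtracting the expected Dirichlet log-densities under $q$ and applying that identity, with the digamma terms correctly involving the total concentration $S$ of $q$.
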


\begin{proof}
The data fitting term follows from the Dirichlet log-expectation identity $\mathbb E[\log \pi_k]=\psi(\alpha_k)-\psi(S)$ (see Eq.~\eqref{eq:expectlog}), where $k$ is the active index of the one-hot label vector $\bm{y}$.

Then, the KL divergence is defined as
$$
\operatorname{KL}(q \,\|\, p) = \mathbb{E}_{q}[\log q(\bm{\pi}) - \log p(\bm{\pi})].
$$
Using the log-density of the Dirichlet distribution,
\begin{equation*}
\mathbb E_q \log \mathrm{Dir}(\bm{\pi} \mid \bm{\alpha}) =-\log \mathrm{B}(\bm{\alpha}) + \sum_{k=1}^K (\alpha_k - 1) \mathbb E_q [\log \pi_k],    
\end{equation*}

Similarly, for  $ \mathbb{E}_q[\log p] $,
\begin{equation*}
\mathbb E_q \log \mathrm{Dir}(\bm{\pi} \mid \bm{\alpha}_0) =-\log \mathrm{B}(\bm{\alpha}_0) + \sum_{k=1}^K (\alpha_{0,k} - 1) \mathbb E_q [\log \pi_k].
\end{equation*}

Subtracting yields the result after simplification using the property $\mathbb E[\log \pi_k]=\psi(\alpha_k)-\psi(S).$ (see Eq.~\eqref{eq:expectlog})
\begin{equation}
    \begin{split}
    \operatorname{KL}(q \,\|\, p)& =\log \frac{\mathrm{B}(\bm{\alpha}_0)}{\mathrm{B}(\bm{\alpha})}+\sum_{k=1}^K(\alpha_k-\alpha_{0,k})\left(\mathbb E_q[\log\pi_k]\right)\\
    &=\log \frac{\mathrm{B}(\bm{\alpha}_0)}{\mathrm{B}(\bm{\alpha})}+\sum_{k=1}^K(\alpha_k-\alpha_{0,k})\left(\psi(\alpha_k)-\psi(S)\right).
    \end{split}
\end{equation}
\end{proof}

\begin{proposition}[Connection to Generalized Bayesian Updating]\label{prop:generalized_belief}
The variational objective  $ \mathcal{L}_{\mathrm{VI}} $  in Eq.~\eqref{eq:elbo} is a special case of the generalized posterior updating framework~\cite{J_2016_General_Bayes}, which revises beliefs by minimizing a composite loss functional of the form
$$
L(q; p, \bm{y}) =h_1(q, \bm{y}) + h_2(q, p)= w \, \mathbb{E}_{q(\bm{\pi})}[\ell(\bm{\pi}, \bm{y})] + \operatorname{KL}\!\bigl( q(\bm{\pi}) \,\|\, p(\bm{\pi}) \bigr),
$$
where $w > 0$ balances data fidelity and prior regularization.
\end{proposition}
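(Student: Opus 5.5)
The plan is to exhibit the composite loss $L(q;p,\bm{y})$ that reproduces $\mathcal{L}_{\mathrm{VI}}$ exactly. Choose the negative log-likelihood $\ell(\bm{\pi},\bm{y}) = -\log p(\bm{y}\mid\bm{\pi}) = -\log \pi_y$, the weight $w=\tau$, and restrict $q$ to the variational family $q=q_\theta(\bm{\pi}\mid\bm{x})$. Then
\[
L(q_\theta; p,\bm{y}) = \tau\,\mathbb{E}_{q_\theta}[-\log p(\bm{y}\mid\bm{\pi})] + \mathrm{KL}(q_\theta\,\|\,p) = \tau\,\mathcal{L}_{\mathrm{VI}}.
\]
Since $\tau>0$ is a constant with respect to $\theta$, $L$ and $\mathcal{L}_{\mathrm{VI}}$ have the same minimizers. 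Hence the EDL objective is an instance of the \cite{J_2016_General_Bayes} framework, with $h_1(q,\bm{y})=\tau\,\mathbb{E}_q[\ell]$ and $h_2(q,p)=\mathrm{KL}(q\|p)$.

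Next I would identify the unconstrained minimizer, so that the match is with the generalized \emph{posterior} itself and not only with the loss. Over all densities on $\Delta^{K-1}$, I rewrite
\[
L(q) = \mathbb{E}_q\!\left[\log \frac{q(\bm{\pi})}{p(\bm{\pi})\,e^{-w\ell(\bm{\pi},\bm{y})}}\right] = \mathrm{KL}\!\left(q \,\Big\|\, \tfrac{1}{Z}\,p\,e^{-w\ell}\right) - \log Z,
\]
where $Z=\int p\,e^{-w\ell}\,\mathrm{d}\bm{\pi}$. By Gibbs' inequality the unique minimizer is the Gibbs posterior $q^\star \propto p(\bm{\pi})\,e^{-w\ell(\bm{\pi},\bm{y})}$. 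With $\ell=-\log p(\bm{y}\mid\bm{\pi})$ and $w=\tau$ this becomes $q^\star \propto p(\bm{\pi})\,p(\bm{y}\mid\bm{\pi})^\tau$, which is the tempered posterior $q_\tau(\bm{\pi}\mid\bm{y})$. This identity is exactly Proposition~\ref{prop:elbo} read in reverse.

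By Proposition~\ref{prop:gen_conj_dir}, this minimizer is $\mathrm{Dir}(\bm{\alpha}_0+\tau\bm{y})$, which lies inside the Dirichlet variational family. So the restriction to $q_\theta$ loses nothing at the level of a single observation. Minimizing $\mathcal{L}_{\mathrm{VI}}$ over Dirichlets therefore recovers the generalized Bayesian update exactly whenever the network can represent $\bm{\alpha}=W\bm{a}+\tau\bm{y}$.

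The main obstacle I expect is interpretive rather than computational. I need to check that the axioms of \cite{J_2016_General_Bayes} hold: coherence, and the requirement that $h_2$ be the KL divergence, which is what forces the Gibbs form. I also need to explain that amortization over $\bm{x}$ turns the per-sample objective into a sum over the dataset. For the latter, I would simply sum $L$ over training pairs, noting that the prior term is applied per sample, and leave it at that.
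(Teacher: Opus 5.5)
Your proposal is correct and follows the paper's own route: take $\ell(\bm{\pi},\bm{y})=-\log p(\bm{y}\mid\bm{\pi})$, restrict $q$ to $q_{\bm{\theta}}(\bm{\pi}\mid\bm{x})$, and identify $w=\tau=1/\lambda$ up to the positive rescaling $L=\tau\,\mathcal{L}_{\mathrm{VI}}$. Your explicit Gibbs-variational derivation and your check that the minimizer $\mathrm{Dir}(\bm{\alpha}_0+\tau\bm{y})$ lies in the Dirichlet family are correct additions; the paper only asserts the first and omits the second, and your worry about verifying the axioms of \cite{J_2016_General_Bayes} is unnecessary because the statement already fixes the functional form.
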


\begin{proof}
The generalized Bayesian framework~\cite{J_2016_General_Bayes} seeks a posterior distribution that coherently incorporates observed data through a task-specific loss $ \ell(\bm{\pi}, \bm{y})$, rather than requiring a probabilistic likelihood model. The optimal posterior minimizes
$$
L(q; p, \bm{y}) = w \, \mathbb{E}_{q(\bm{\pi})}[\ell(\bm{\pi}, \bm{y})] + \operatorname{KL}\!\bigl( q(\bm{\pi}) \,\|\, p(\bm{\pi}) \bigr),
$$
and admits the Gibbs form $\hat{q}(\bm{\pi}) \propto \exp(-w \ell(\bm{\pi}, \bm{y})) p(\bm{\pi})$.

In our setting, we choose the loss function as the negative log-likelihood, $\ell(\bm{\pi}, \bm{y}) = -\log p(\bm{y} \mid \bm{\pi})$, and restrict the posterior to a parametric family $q_\theta(\bm{\pi} \mid \bm{x})$ conditioned on input $\bm{x}$. Putting coefficient $\lambda$ on the KL term, our training objective becomes
$$
\min_{\theta} \;\mathcal{L}_{\mathrm{VI}}=\mathbb{E}_{q_\theta(\bm{\pi}\mid \bm{x})}\!\bigl[ -\log p(\bm{y} \mid \bm{\pi}) \bigr] +\lambda \operatorname{KL}\!\bigl( q_\theta(\bm{\pi}\mid \bm{x}) \,\|\, p(\bm{\pi}) \bigr),
$$ 
which is precisely an instantiation of the general functional $L(q; p, \bm{y})$.
\end{proof}

\begin{proposition}[MSE-loss]\label{prop:MSE-loss}
The training objective in vanilla Evidential Deep Learning~\cite{C_2018NIPS_EDL}
$$
\mathcal{L} = \sum_{i=1}^{N} \mathbb{E}_{\bm{\pi} \sim \mathrm{Dir}(\bm{\pi} \mid \bm{\alpha}(\mathbf{x}^{(i)}))} \bigl[ \|\bm{y}^{(i)} - \bm{\pi}\|^2 \bigr] 
+ \lambda_t \sum_{i=1}^{N} \operatorname{KL}\!\left[ \mathrm{Dir}(\bm{\pi}_i \mid \tilde{\bm{\alpha}}_i) \,\middle\|\, \mathrm{Dir}(\bm{\pi}_i \mid \mathbf{1}) \right],
$$ 
can be viewed as a variant of our variational inference objective under generalized posterior updating.
\end{proposition}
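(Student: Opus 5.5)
We plan to show that the vanilla EDL objective has the form $L(q;p,\bm{y}) = w\,\mathbb{E}_{q}[\ell(\bm{\pi},\bm{y})] + \operatorname{KL}(q\|p)$ of Proposition~\ref{prop:generalized_belief}. The only changes are that the negative log-likelihood is replaced by a squared-error loss, and that the KL term is applied to a modified concentration $\tilde{\bm{\alpha}}$.

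\textbf{Step 1: the data term as a Gibbs loss.} Take the loss $\ell(\bm{\pi},\bm{y}) = \|\bm{y}-\bm{\pi}\|^2$ in Proposition~\ref{prop:generalized_belief}. The optimal unrestricted posterior is then the Gibbs measure
\[
\hat q(\bm{\pi}) \propto \exp\!\bigl(-w\|\bm{y}-\bm{\pi}\|^2\bigr)\,p(\bm{\pi}),
\]
which plays the role of a tempered pseudo-likelihood $\exp(-\ell)^{w}$. This is a direct analogue of the tempered posterior in Proposition~\ref{prop:gen_conj_dir}.

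\textbf{Step 2: the KL decomposition and the weight correspondence.} Repeat the computation of Proposition~\ref{prop:elbo} with $\tau\log p(\bm{y}\mid\bm{\pi})$ replaced by $-w\,\ell(\bm{\pi},\bm{y})$. This gives
\[
\operatorname{KL}(q_\theta\|\hat q) = \operatorname{KL}(q_\theta\|p) + w\,\mathbb{E}_{q_\theta}[\ell] + \log Z_w,
\]
where $Z_w$ does not depend on $\theta$. Divide by $w$ and set $\lambda_t = 1/w = \tau^{-1}$, with prior $p = \mathrm{Dir}(\mathbf{1})$, that is, $W = K$ and a uniform base rate. Restricting $q$ to the Dirichlet family $q_\theta = \mathrm{Dir}(\bm{\alpha}(\bm{x}))$ and summing over the independent samples $i$ then recovers the per-sample EDL objective. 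The annealing schedule $\lambda_t$ is read as a time-varying temperature. For completeness, we would also note the closed form of the data term, using the moments from Proposition~\ref{prop:expectation}:
\[
\mathbb{E}\|\bm{y}-\bm{\pi}\|^2 = \sum_k \Bigl[(y_k - \alpha_k/S)^2 + \mathrm{Var}[\pi_k]\Bigr].
\]
This confirms that the term is a well-defined expected loss under $q_\theta$.

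\textbf{Step 3: the main obstacle, the modified KL term.} In vanilla EDL the KL term uses $\tilde{\bm{\alpha}} = \bm{y} + (1-\bm{y})\odot\bm{\alpha}$ rather than $\bm{\alpha}$, so it is not literally $\operatorname{KL}(q_\theta\|p)$. We would handle this by treating $\mathrm{Dir}(\tilde{\bm{\alpha}})$ as the variational distribution projected onto the non-target coordinates: only misleading evidence is regularized toward the prior, while the target coordinate is clamped at the prior value $1$. This makes the term a sample-dependent modification of the regularizer in the generalized functional $h_2(q,p)$. Proposition~\ref{prop:generalized_belief} allows an arbitrary divergence-type $h_2$, so the objective is still an instance of the generalized belief-update functional, though no longer exactly the tempered-posterior KL. This is why the statement says ``a variant''. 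We would state this caveat explicitly rather than claim exact equivalence.
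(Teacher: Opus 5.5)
Your proposal follows the paper's proof essentially step for step. Both use the squared-error Gibbs posterior, the KL decomposition with $\log Z$ dropped and a rescaling giving $\lambda_t = 1/w$, the $\mathrm{Dir}(\mathbf{1})$ prior, and a heuristic treatment of the $\tilde{\bm{\alpha}}$ clamp; the paper simply ``heuristically sets'' $q_\theta=\mathrm{Dir}(\tilde{\bm{\alpha}})$ in the KL term. Your Step~3 overstates one point, though: Proposition~\ref{prop:generalized_belief} fixes $h_2=\mathrm{KL}(q\,\|\,p)$, and the clamped term depends on $\bm{y}$ and vanishes whenever the non-target concentrations equal $1$, so the objective is a loss-plus-penalty variant rather than literally an instance of that functional---which is the ``variant'' hedge you already adopt.
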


\begin{proof}
Under the generalized (Gibbs) posterior framework, suppose the loss function is defined as 
$$
\ell(\bm{\pi},\bm{y}) = \|\bm{y} - \bm{\pi}\|^2.
$$
Or equivalently, consider a Gaussian likelihood model $p(\bm{y}\mid\bm{\pi})=\mathcal N(\bm{\pi},\sigma^2 I)$, then we use the negative log-likelihood loss function again and get $\ell(\bm{\pi},\bm{y})=\|\bm{y}-\bm{\pi}\|^2$. 
 
Then the Gibbs posterior takes the form:
$$
q_{\lambda}(\bm{\pi}) = \frac{1}{Z} \exp\!\left( -\frac{1}{\lambda} \ell(\bm{\pi},\bm{y}) \right) p(\bm{\pi}),
$$
where  $ Z $  is the normalization constant and  $ \lambda > 0 $  acts as an inverse temperature (annealing) parameter.

From the perspective of variational inference, we aim to minimize the KL divergence between an approximate posterior  $ q_{\theta}(\bm{\pi}) $  and the true Gibbs posterior  $ q_{\lambda}(\bm{\pi}) $ . Following the derivation in Prop.~\ref{prop:expectation}, we have:
$$
\operatorname{KL}\bigl( q_{\theta}(\bm{\pi}) \,\|\, q_{\lambda}(\bm{\pi}) \bigr)
= \operatorname{KL}\bigl( q_{\theta}(\bm{\pi}) \,\|\, p(\bm{\pi}) \bigr) + \frac{1}{\lambda} \, \mathbb{E}_{q_{\theta}}[\ell(\bm{\pi},\bm{y})] + \log Z.
$$
 
Discarding the constant term  $ \log Z $  (independent of  $ \theta $ ) and multiplying the entire expression by  $ \lambda $ , the optimization objective becomes:
$$
\mathcal{L} = \mathbb{E}_{\bm{\pi} \sim q_{\theta}(\bm{\pi}\mid \bm{x})} \bigl[ \|\bm{y} - \bm{\pi}\|^2 \bigr] 
+ \lambda \, \operatorname{KL}\!\left( q_{\theta}(\bm{\pi}\mid \bm{x}) \,\middle\|\, p(\bm{\pi}) \right).
$$

Now, if we fix the prior as  $ p(\bm{\pi}) = \mathrm{Dir}(\bm{\pi} \mid \mathbf{1}) $  and heuristically set the approximate posterior to  $ q_{\theta}(\bm{\pi}) = \mathrm{Dir}(\bm{\pi} \mid \tilde{\bm{\alpha}}(\mathbf{x})) $ , where  $ \tilde{\bm{\alpha}} = \bm{y} + (\mathbf{1} - \bm{y}) \odot \bm{\alpha} $  (with  $ \odot $  denoting element-wise multiplication), then summing over the training set yields
$$
\mathcal{L} = \sum_{i=1}^{N} \mathbb{E}_{\bm{\pi} \sim \mathrm{Dir}(\bm{\pi} \mid \bm{\alpha}(\mathbf{x}^{(i)}))} \bigl[ \|\bm{y}^{(i)} - \bm{\pi}\|^2 \bigr] 
+ \lambda_t \sum_{i=1}^{N} \operatorname{KL}\!\left[ \mathrm{Dir}(\bm{\pi}_i \mid \tilde{\bm{\alpha}}_i) \,\middle\|\, \mathrm{Dir}(\bm{\pi}_i \mid \mathbf{1}) \right],
$$
which recovers the vanilla EDL objective.
\end{proof}

\begin{proposition}[Relationship to PAC Bound]\label{prop:PACbound}
The form of the regularizer in $\mathcal L_{\text{VI}}$ is structurally consistent with a PAC-Bayes bound.
\end{proposition}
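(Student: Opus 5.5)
The plan is to start from a standard PAC-Bayes bound of McAllester/Catoni type and then match its terms to those of $\mathcal{L}_{\mathrm{VI}}$. Take a data-independent prior $p(\bm{\pi})=\mathrm{Dir}(\bm{\alpha}_0)$ with $\bm{\alpha}_0=W\bm{a}$. Take as posterior the (data-dependent) family $q_\theta(\bm{\pi}\mid\bm{x})$, and as loss $\ell(\bm{\pi},\bm{y})=-\log p(\bm{y}\mid\bm{\pi})$.

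\textbf{Step 1: state Catoni's bound.} For $\lambda>0$ and confidence $1-\delta$, simultaneously for all $q$,
\[
\mathbb{E}_{q}[R(\bm{\pi})]\le \mathbb{E}_{q}[\hat R_N(\bm{\pi})]+\frac{\mathrm{KL}(q\,\|\,p)+\log(1/\delta)+\Psi(\lambda,N)}{\lambda}.
\]
Here $R$ is the population risk, $\hat R_N$ is the empirical risk over $N$ samples, and $\Psi$ is a moment-generating term independent of $q$. Since $-\log\pi_y$ is unbounded, I would cite the version that assumes a sub-Gaussian or sub-gamma loss, or apply a truncation to $[0,\log(1/\epsilon)]$ by restricting to $\pi_y\ge\epsilon$. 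This is asserted only for the ``structural'' claim, not for tightness.

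\textbf{Step 2: compare with $\mathcal{L}_{\mathrm{VI}}$.} Drop the $q$-independent terms $\log(1/\delta)$ and $\Psi$. Minimizing the right-hand side over $q$ is then exactly minimizing $\mathbb{E}_q[\hat R_N]+\lambda^{-1}\mathrm{KL}(q\|p)$. This has the form of Eq.~\eqref{eq:elbo}, with the identification $1/\lambda\leftrightarrow 1/\tau$, so the tempering $\tau$ plays the role of the PAC-Bayes inverse-temperature $\lambda$. I would also recall, via Prop.~\ref{prop:generalized_belief}, that the unrestricted minimizer is the Gibbs posterior $\propto p(\bm{\pi})p(\bm{y}\mid\bm{\pi})^{\lambda}$, matching $q_\tau$ of Prop.~\ref{prop:gen_conj_dir}. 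The closed forms of Prop.~\ref{prop:closed-form} then make each term explicit for Dirichlets.

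\textbf{Step 3: per-sample amortization, the main obstacle.} In $\mathcal{L}_{\mathrm{VI}}$ the posterior depends on $\bm{x}$ and the KL is summed per sample. In PAC-Bayes, by contrast, there is a single posterior over hypotheses. I would handle this by viewing the hypothesis as the whole map $\bm{x}\mapsto\bm{\pi}$, with a product-form prior and posterior over per-input latent vectors. The KL then becomes a sum or expectation of per-sample Dirichlet KLs, which is consistent with the scaling of $\tau$ with sample size in Eq.~\eqref{eq:tau_schedule}. Because this identification is heuristic, I would phrase the conclusion as ``structural consistency'', as the statement does, rather than as a rigorous generalization guarantee.
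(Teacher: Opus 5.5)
Your proposal is correct and follows essentially the paper's route. Take a PAC-Bayes bound of the form $\mathbb{E}_q[\hat L]+(\text{coefficient})\cdot\mathrm{KL}(q\,\|\,p)+(\text{$q$-independent terms})$, drop the $q$-independent terms, identify the KL coefficient with $1/\tau$, and replace the parameter-level $p(\theta),q(\theta)$ by the sample-wise $p(\bm{\pi}),q_\theta(\bm{\pi}\mid\bm{x})$. There are two differences in detail. First, the paper derives its bound directly from the change of measure $\mathrm{KL}(q\,\|\,p)+\log\mathbb{E}_p[e^{\phi}]\ge\mathbb{E}_q[\phi]$ plus Markov's inequality, keeping a generic moment term $f_{p,\nu}(\lambda)$, so your sub-Gaussian/truncation detour is not needed for a structural claim. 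Second, the paper justifies the amortization step only by appeal to prior work, and your product-form construction is no more rigorous than that appeal: it would give a KL summed over the entire input space rather than over the training samples.
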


\begin{proof}
In the PAC framework~\cite{C_2020NIPS_PAC}, we have
\begin{equation}\label{eq:PACboundl}
\mathbb{E}_{q(\theta)}[L(\theta)] \leq \mathbb{E}_{q(\theta)}[\hat{L}(\theta \mid \mathcal{D})] + \lambda \Bigl( \operatorname{KL}(q \| p) + f_{p,\nu}(\lambda) - \log \delta \Bigr),
\end{equation}

which holds with probability at least $1 - \delta$, and
$$
f_{p,\nu}(\lambda) = \log \mathbb{E}_{\mathcal{D} \sim \nu(x)}\mathbb{E}_{p(\theta)} \left[ \exp\left\{ \frac{1}{\lambda} \bigl( L(\theta) - \hat{L}(\theta \mid \mathcal{D}) \bigr) \right\} \right].
$$

We now prove this inequality. First, let $\phi(\theta)$ be an arbitrary function of $\theta$. Then we have
$$
\begin{aligned}
\operatorname{KL}(q \| p) + \log \mathbb{E}_{p(\theta)} \big[ \exp \phi(\theta) \big]
&= \mathbb{E}_{q(\theta)} \left[ \log \frac{q(\theta)}{p(\theta) / \mathbb{E}_{p(\theta)}[\exp \phi(\theta)]} \right] \\
&= \mathbb{E}_{q(\theta)} \left[ \log q(\theta) - \log \left\{ \frac{p(\theta) \exp \phi(\theta)}{\mathbb{E}_{p(\theta)}[\exp \phi(\theta)]} \right\} + \log \exp \phi(\theta) \right] \\
&= \operatorname{KL}\left( q \,\middle\|\, \frac{p(\theta) \exp \phi(\theta)}{\mathbb{E}_{p(\theta)}[\exp \phi(\theta)]} \right) + \mathbb{E}_{q(\theta)} \big[ \phi(\theta) \big] \\
&\geq \mathbb{E}_{q(\theta)} \big[ \phi(\theta) \big],
\end{aligned}
$$
where the last inequality follows from the non-negativity of KL divergence.
Now Assume $\phi(\theta) = D\bigl(L(\theta),\hat{L}(\theta \mid \mathcal{D}) \big)$, where $D$ is a distance function. Thus randomness in $\mathcal D$ is passed to $\phi(\theta)$. For a non-negative random variable $X$, by Markov's inequality we have
$$
\mathbb P(X\geq a)\leq \frac {\mathbb E[X]} a \Rightarrow \mathbb P(X\leq a)\geq 1-\frac {\mathbb E[X]}{a}
$$
Note that expectation of an exponential function is non-negative, so we can substitute $\frac{ \mathbb{E}_{\mathcal{D} \sim \nu(x)} [\mathbb{E}_{p(\theta)} \exp \phi(\theta) ] }{\delta}=a$ and $\mathbb{E}_{p(\theta)}\exp \phi(\theta)=X$. Therefore,
$$
\mathbb{P} \left( \mathbb{E}_{p(\theta)}\exp \phi(\theta) \leq \frac{ \mathbb{E}_{\mathcal{D} \sim \nu(x)} [\mathbb{E}_{p(\theta)} \exp \phi(\theta) ] }{\delta} \right) 
\geq 1 - \frac{ \mathbb{E}_{\mathcal{D} \sim \nu(x)} [ \mathbb{E}_{p(\theta)}\exp \phi(\theta) ] }{ \mathbb{E}_{\mathcal{D} \sim \nu(x)} [ \mathbb{E}_{p(\theta)}\exp \phi(\theta) ] / \delta } = 1 - \delta.
$$
Moreover, from the earlier inequality we obtain:
\begin{equation}\label{eq:PACboundg}
\mathbb{E}_{q}[\phi(\theta)] \leq \operatorname{KL}(q \| p) + \log \mathbb E_{p(\theta)}\mathbb{E}_{\mathcal{D} \sim \nu(x)} [ \exp \phi(\theta) ] - \log \delta.
\end{equation}

Now substitute $\phi(\theta) = \frac{1}{\lambda} \big( L(\theta) - \hat{L}(\theta \mid \mathcal{D}) \big)$, the inequality above is equivalent to Eq.~\eqref{eq:PACboundl} and holds with probability at least $1 - \delta$ .
In practical applications, recent works such as~\cite{C_2021_bayes_pac,C_2023ICLR_IEDL} employ a PAC-Bayes bound of the form in Eq.~\eqref{eq:PACboundg} and aim to minimize the expected risk
$$
\mathbb{E}_{(x,y) \sim \mathcal{P}} \, \mathbb{E}_{q_\theta(\bm{\pi} \mid x)} \big[ \ell(y,\bm{\pi} \mid x) \big],
$$ 
where  $ \ell(y,\bm{\pi} \mid x) $  is a loss function, \eg, $-p(y \mid \bm{\pi})$ as used in~\cite{C_2021_bayes_pac}. (Moreover, in~\cite{C_2021_bayes_pac}, the choice of $\phi(\theta)$ is based on a squared distance, leading to a regularizer involving a square-root KL divergence term.) These approaches replace the global parameter-level prior $p(\theta)$ and posterior $q(\theta)$ with sample-wise predictive distributions: a fixed prior $p(\bm{\pi})$ and a data-dependent posterior approximation $q_\theta(\bm{\pi} \mid x)$.

Since the term $f_{p,\nu}(\lambda)$ in Eq.~\eqref{eq:PACboundl} depends on the unknown true risk $L(\theta)$ and cannot be computed, it is omitted during optimization, leading to the practical objective that aligns with our variational loss $\mathcal{L}_{\mathrm{VI}}$ when $\lambda = 1/\tau$. That is,
$$-\frac 1 N\sum_{i=1}^N\mathbb E_{q_\theta(\bm{\pi}\mid \bm{x^{(i)}})}\log p(\bm{y}^{(i)}\mid\bm{\pi})+\lambda \operatorname{KL}(q_\theta(\bm{\pi}\mid \bm{x^{(i)}})\|p(\bm{\pi})).$$
\end{proof}

\begin{proposition}[Asymptotic Behavior of Distributional Uncertainty]\label{prop:u}
Assume the normalized evidence vector 
 $\bm{\alpha}/S$  converges to a fixed
   $ \bar{\bm{\pi}} \in \Delta^{K-1} $  as  $ S\to \infty $ . Then the mutual information satisfies
$$
H\!\big( \mathbb{E}[\pi_y] \big) - \mathbb{E}\big[ H(\pi_y) \big] = \frac{K - 1}{2S} + \mathcal{O}\!\left( S^{-2} \right).
$$
\end{proposition}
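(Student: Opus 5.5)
We will make both terms explicit with Prop.~\ref{prop:expectation} and then expand them asymptotically. By that proposition, $\mathbb{E}[\pi_k]=\alpha_k/S$, so the first term is
\[
H(\mathbb{E}[\pi_y])=-\sum_k \frac{\alpha_k}{S}\log\frac{\alpha_k}{S}.
\]
The expected entropy is $\mathbb{E}[H(\pi_y)]=-\sum_k \mathbb{E}[\pi_k\log\pi_k]$. Using the last identity of Prop.~\ref{prop:expectation}, this becomes
\[
-\sum_k \frac{\alpha_k}{S}\bigl(\psi(\alpha_k+1)-\psi(S+1)\bigr).
\]
Writing $p_k=\alpha_k/S$ and using $\sum_k p_k=1$, the mutual information reduces to the single closed-form expression
\[
I=\sum_k p_k\Bigl[\bigl(\psi(\alpha_k+1)-\log\alpha_k\bigr)-\bigl(\psi(S+1)-\log S\bigr)\Bigr].
\]

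Next we apply the standard digamma expansion $\psi(x+1)=\log x+\frac{1}{2x}-\frac{1}{12x^2}+\mathcal{O}(x^{-4})$ as $x\to\infty$. Each bracket then becomes $\frac{1}{2\alpha_k}-\frac{1}{2S}+\mathcal{O}(\alpha_k^{-2})+\mathcal{O}(S^{-2})$. Multiplying by $p_k=\alpha_k/S$ gives $p_k\cdot\frac{1}{2\alpha_k}=\frac{1}{2S}$ for every $k$. Summing over $k$ yields $\frac{K}{2S}-\frac{1}{2S}=\frac{K-1}{2S}$. The remaining terms are $p_k\,\mathcal{O}(\alpha_k^{-2})=\mathcal{O}\bigl(1/(S\alpha_k)\bigr)$, together with $\mathcal{O}(S^{-2})$.

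The main obstacle is controlling these remainders uniformly, which is where the hypothesis $\alpha_k/S\to\bar{\pi}_k$ enters. If every $\bar{\pi}_k>0$, then $\alpha_k=S\bar{\pi}_k(1+o(1))\to\infty$. Hence $\mathcal{O}(1/(S\alpha_k))=\mathcal{O}(S^{-2})$ with constant roughly $1/\bar{\pi}_k$, and summing over the finitely many $k$ gives the claim.

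If some $\bar{\pi}_k=0$, that component need not diverge and the per-term expansion fails. The fallback is to bound the function $g(x)=x\bigl(\psi(x+1)-\log x\bigr)-\tfrac12$. This function is bounded on $(0,\infty)$ and tends to $0$ at infinity. Its contribution to $I$ is $g(\alpha_k)/S$, which is $\mathcal{O}(1/S)$ but not $\tfrac{1}{2S}+\mathcal{O}(S^{-2})$. So for the stated $\mathcal{O}(S^{-2})$ error we will assume $\bar{\pi}$ lies in the interior of the simplex, or at least that every $\alpha_k\to\infty$. We will note that in the boundary case the leading constant $K-1$ is replaced by the number of components that actually diverge, minus one, up to $\mathcal{O}(1/S)$ corrections. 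Apart from this case, the proof is the direct bookkeeping of the expansion above.
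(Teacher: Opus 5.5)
Your proposal is correct and follows the paper's own proof. Both write $H(\mathbb{E}[\pi_y])$ and $\mathbb{E}[H(\pi_y)]$ in closed form via Prop.~\ref{prop:expectation}, insert $\psi(x+1)=\log x+\frac{1}{2x}+\mathcal{O}(x^{-2})$, and sum the $\frac{1}{2S}$ contributions. The paper likewise tacitly needs every $\bar{\pi}_k>0$ (it writes ``when $\alpha_i\to\infty$'' and absorbs $\mathcal{O}(\alpha_k^{-2})$ into $\mathcal{O}(S^{-2})$), so your explicit interior assumption is the right sharpening.

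Two side remarks need correcting. First, ``every $\alpha_k\to\infty$'' alone does not give an $\mathcal{O}(S^{-2})$ remainder: the $k$-th error is about $-\frac{1}{12S\alpha_k}$, which is only $o(S^{-1})$ when $\alpha_k=o(S)$. Second, in the boundary case, a component with $\alpha_k$ bounded and bounded away from $0$ adds a genuine $\Theta(S^{-1})$ amount $\frac{1}{S}\alpha_k\bigl(\psi(\alpha_k+1)-\log\alpha_k\bigr)$, so the leading constant is not just the number of diverging components minus one.
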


\begin{proof}
It is well known that $\psi(x+1)=\log x+\frac 1 {2x}+\mathcal{O}(x^{-2}).$ Or you can see it from the following rough derivation.

Using Stirling's approximation, as $x\to+\infty$ we have $$\Gamma(x+1)=\sqrt{2\pi x}x^xe^{-x}(1+\mathcal{O}(x^{-1})).$$
Taking the natural logarithm on both sides yields 
$$\log{\Gamma(x+1)}=(x+\frac 1 2)\log x-x+\frac 1 2\log(2\pi)+\mathcal{O}(x^{-1}).$$
Now, recall that the digamma function is the derivative of the log-gamma function. Differentiating the asymptotic expansion term by term, we obtain
$$\psi(x+1)=\log x+\frac 1 {2x}+\mathcal{O}(x^{-2}).$$ 

According to Eq.~\eqref{eq:expectp}, we can write down the closed form of $\mathbb E[\pi_y]$, thus we have
$$
H(\mathbb E[\pi_y])=-\sum_{k=1}^{K} \frac{\alpha_k}{S}\log\frac{\alpha_k}{S}.
$$
And according to Eq.~\eqref{eq:expecth}, we can directly get
$$
\mathbb E[H(\pi_y)]=-\sum_{k=1}^{K}\frac{\alpha_k}{S}\left(\psi(\alpha_k+1)-\psi(S+1)\right).
$$

When $\alpha_i\to \infty$, using the asymptotic expansion:
\begin{equation*}
    \begin{split}
&H\!\big( \mathbb{E}[\pi_y] \big) - \mathbb{E}\big[ H(\pi_y) \big]\\
=&-\sum_{k=1}^{K} \frac{\alpha_k}{S}\log\frac{\alpha_k}{S}+\sum_{k=1}^{K}\frac{\alpha_k}{S}\left(\log\frac{\alpha_k}{S}+\frac 1 {2S}(\frac{S}{\alpha_k}-1)+\mathcal{O}(S^{-2})\right)\\
=&\frac {K-1} {2S}+\mathcal{O}(S^{-2}).
    \end{split}
\end{equation*}
\end{proof}

\section{Experimental settings}\label{apdx:exp_set}
\subsection{Dataset}
Following the experimental setup of~\cite{C_2023ICLR_IEDL,C_2024ICLR_REDL}, we experiment with the following image classification datasets.
\begin{enumerate}
    \item \textbf{MNIST}~\cite{1998_MNIST}, \textbf{FMNIST}~\cite{A_2017_FashionMNIST}, \textbf{KMNIST}~\cite{C_2018NIPSW_KMNIST};
    \item \textbf{CIFAR-10}~\cite{2009_CIFAR10}, \textbf{SVHN}~\cite{D_2018street}, \textbf{CIFAR-100}~\cite{2009_CIFAR10}.
\end{enumerate}
Within each group, we specify the first dataset as in-distribution (ID) data, while the subsequent ones are used as out-of-distribution (OOD) data. Here is a detailed breakdown of each dataset:

The \textbf{MNIST}~\cite{1998_MNIST} dataset consists of handwritten digit images ranging from 0 to 9. Specifically, it contains 60,000 training images and 10,000 test images, each normalized to a  $ 28 \times 28 $  pixel tensor. We split the training set at a ratio of (0.8, 0.2) into training and validation sets.

The \textbf{Fashion-MNIST (FMNIST)}~\cite{A_2017_FashionMNIST} dataset is designed as a more challenging alternative to MNIST. Created by Zalando Research, FMNIST contains grayscale images of clothing items such as shirts, pants, sneakers, and bags. The dataset is structured similarly to MNIST, with 60,000 training images and 10,000 test images, each normalized to a  $ 28 \times 28 $  pixel tensor. When MNIST is used as ID data, FMNIST serves as OOD data.

\textbf{Kuzushiji-MNIST (KMNIST)}~\cite{C_2018NIPSW_KMNIST} is another ``drop-in'' alternative to MNIST, consisting of 60,000 training images and 10,000 test images of handwritten Japanese cursive hiragana characters. Like MNIST, each image is processed as a  $ 28 \times 28 $  pixel tensor. When MNIST is used as ID data, KMNIST is used as OOD data.

\textbf{CIFAR-10}~\cite{2009_CIFAR10} contains 60,000 color images of size  $ 32 \times 32 $ , distributed across 10 categories such as airplanes, birds, cats, and ships, with 6,000 images per class. Of these, 50,000 are used for training and 10,000 for testing. We further split the training set at a ratio of (0.95, 0.05) into training and validation sets.

The \textbf{Street View House Numbers (SVHN)}~\cite{D_2018street} dataset consists of house number images extracted from Google Street View. It contains 73,257 training samples and 26,032 test samples. When training on CIFAR-10, we use SVHN as OOD data.

\textbf{CIFAR-100}~\cite{2009_CIFAR10} has a structure similar to CIFAR-10 but contains 100 classes, each with 600 images (500 for training and 100 for testing). When CIFAR-10 is used as ID data, CIFAR-100 is used as OOD data. 

\textbf{CIFAR-10-C}~\cite{A_2019_cifar10c} is a corrupted version of the CIFAR-10 test set designed to evaluate model robustness under distribution shifts. It contains 15 types of common image corruptions, including noise, blur, weather, and digital distortions, each applied at five different severity levels. This results in 50,000 corrupted images in total, while preserving the original labels. CIFAR-10-C is used solely for testing and provides a standardized benchmark for measuring classification performance under systematic input degradations.

\subsection{Implementation Details}

Following~\cite{C_2023ICLR_IEDL,C_2024ICLR_REDL}, we adopt a standard experimental setup to ensure fair comparison across datasets and methods. 
The main hyperparameters and training configurations are summarized in Table~\ref{tab:implementation_details}.

\begin{table}[htbp]
\centering
\caption{Implementation details of experiments in the classical setting.}
\label{tab:implementation_details}
\begin{tabular}{l l l l l l c c c}
\toprule
ID dataset & OOD dataset & Architecture & Optimizer & Learning rate & Scheduler & Epochs & $C_{\tau}$ & $C_W$ \\
\midrule
MNIST & FMNIST \& KMNIST & ConvNet & Adam & $1\times10^{-3}$ & Cosine & 200 & 100 & 0.5 \\
CIFAR-10 & SVHN \& CIFAR-100 & VGG16 & Adam & $1\times10^{-4}$ & Cosine & 200 & 100 & 0.5 \\
\bottomrule
\end{tabular}
\end{table}

All experiments are conducted with a batch size of 128. 
Weight decay is set to $5\times10^{-4}$ for CIFAR-10 and to 0 for MNIST.
Each result is averaged over five independent runs with different random seeds.
We adopt the softplus function~\cite{C_2000NIPS_softplus} as the evidence activation to guarantee non-negativity of the predicted evidence.
All experiments are performed on a single NVIDIA A100 GPU.

OOD detection is evaluated without retraining, using uncertainty scores computed directly from models trained on in-distribution data.

For the computation of the scheduled evidence strength $\tau_t$ in Eq.~\eqref{eq:tau_schedule}, the expectation $\mathbb{E}[S(x_i)]$ is approximated by the mini-batch average of the Dirichlet strength at each training iteration.
To ensure compatibility with existing EDL variants that employ KL weights in the range $[0,1]$, we further enforce $\tau_t \ge 1$ by setting $
\tau_t = \max(1,\; \tau_t).$

For both the adaptive prior strength $W$ in Eq.~\eqref{eq:weight_update} and the scheduled evidence strength $\tau_t$ in Eq.~\eqref{eq:tau_schedule}, we apply a stop-gradient operator $\mathrm{stopgrad}(\cdot)$ to prevent gradients from propagating through these adaptive quantities.
This design ensures that the optimization of the network parameters remains consistent with the derived variational objective, and avoids introducing implicit dependencies between the prior specification and the variational posterior.

For CIFAR-100 training, to facilitate stable evidence accumulation in the early stages of optimization, we initialize the model using a non-evidential classifier trained for the first 5 epochs, and then switch to the full GEDL objective.
This warm-up strategy improves convergence and prevents premature over-regularization when evidence is still unreliable.

All models are trained and evaluated using the same data preprocessing and evaluation protocols as in~\cite{C_2023ICLR_IEDL,C_2024ICLR_REDL}.

\subsection{Source Code}
The code is available at an anonymous repository: \href{https://anonymous.4open.science/r/GEDL/}{https://anonymous.4open.science/r/GEDL/}.

\section{Additional Results}\label{apdx:addition_results}

\subsection{Ablation Studies}

Table~\ref{tab:ablation} reports an ablation study that investigates the contribution of individual components in GEDL.
We incrementally incorporate (i) KL regularization applied to all samples as derived from the variational inference formulation, (ii) an adaptive evidence strength parameter $\tau$, and (iii) an adaptive prior strength $W$.
The evaluation is conducted on CIFAR-10 as in-distribution data and SVHN as out-of-distribution data, using both classification accuracy and confidence-based uncertainty metrics.
The results demonstrate that each component yields consistent improvements in uncertainty estimation while preserving strong predictive performance, and that the full GEDL model achieves the best overall balance between accuracy and reliable uncertainty quantification.

\begin{table}[htbp!]
\centering
\caption{Ablation study of GEDL components.}
\label{tab:ablation}
\begin{tabular}{c c c c c c}
\toprule
KL (all samples) & Adaptive $\tau$ & Adaptive $W$ 
& ID-Acc (\%) & ID-Conf.MP (\%) & OOD-Conf.MP (\%) \\
\midrule
\texttimes & \texttimes & \texttimes & 89.06 & 98.83 & 85.98 \\
\checkmark & \texttimes & \texttimes & 86.83 & 97.04 & 87.33 \\
\checkmark & \checkmark & \texttimes & 89.43 & 97.72 & 89.22 \\
\midrule
\checkmark & \checkmark & \checkmark & \textbf{90.77} & \textbf{98.98} & \textbf{93.63} \\
\bottomrule
\end{tabular}
\end{table}

\subsection{Parameter Analysis}

We further investigate the sensitivity of GEDL to the two key hyperparameters introduced in our framework: the prior strength constant $C_W$ and the evidence strength constant $C_{\tau}$.

\paragraph{Sensitivity to $C_W$.}
Fig.~\ref{fig:cw_sensitivity} illustrates the effect of varying $C_W$ on in-distribution confidence estimation and out-of-distribution detection performance.
We evaluate $C_W \in \{0.3, 0.5, 0.7\}$ while keeping all other hyperparameters fixed, and report results on CIFAR-10 as in-distribution data and SVHN and CIFAR-100 as OOD datasets.

\paragraph{Sensitivity to $C_{\tau}$.}
Fig.~\ref{fig:ctau_sensitivity} presents the performance under different choices of the evidence strength constant $C_{\tau} \in \{70, 90, 110, 130, 150\}$.

\begin{figure}[htbp!]
    \centering
    \begin{subfigure}[t]{0.48\textwidth}
        \centering
        \includegraphics[width=\linewidth]{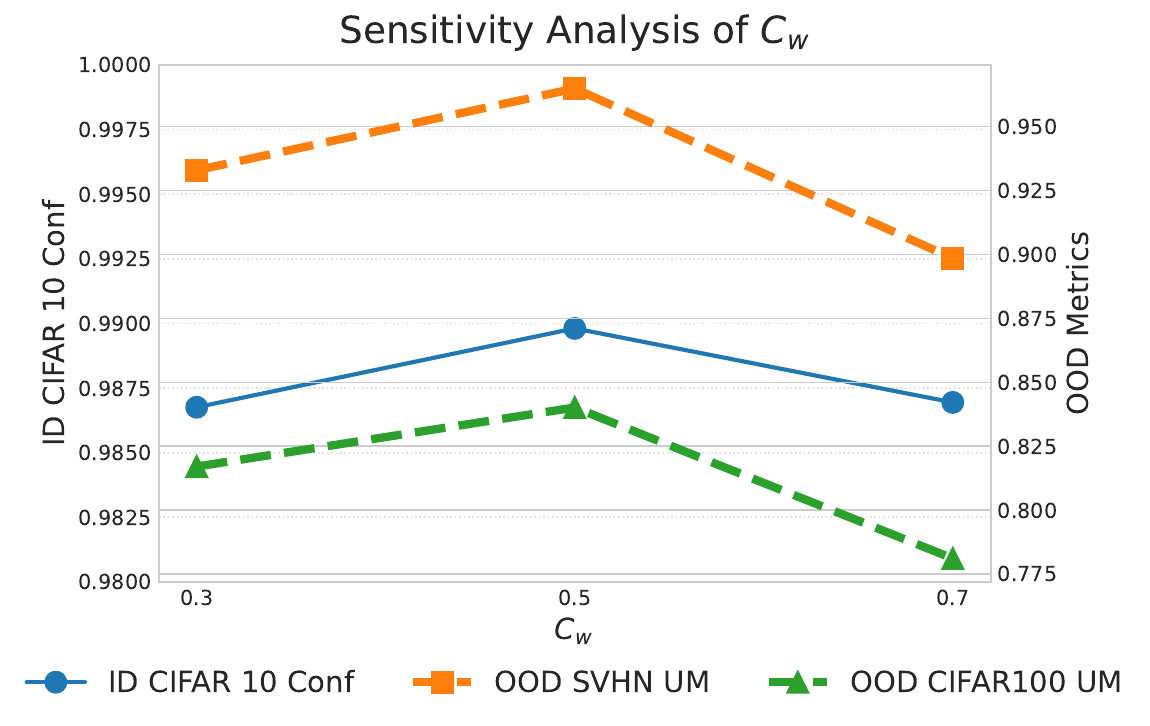}
        \caption{Sensitivity analysis w.r.t. prior strength constant $C_W$.}
        \label{fig:cw_sensitivity}
    \end{subfigure}
    \hfill
    \begin{subfigure}[t]{0.48\textwidth}
        \centering
        \includegraphics[width=\linewidth]{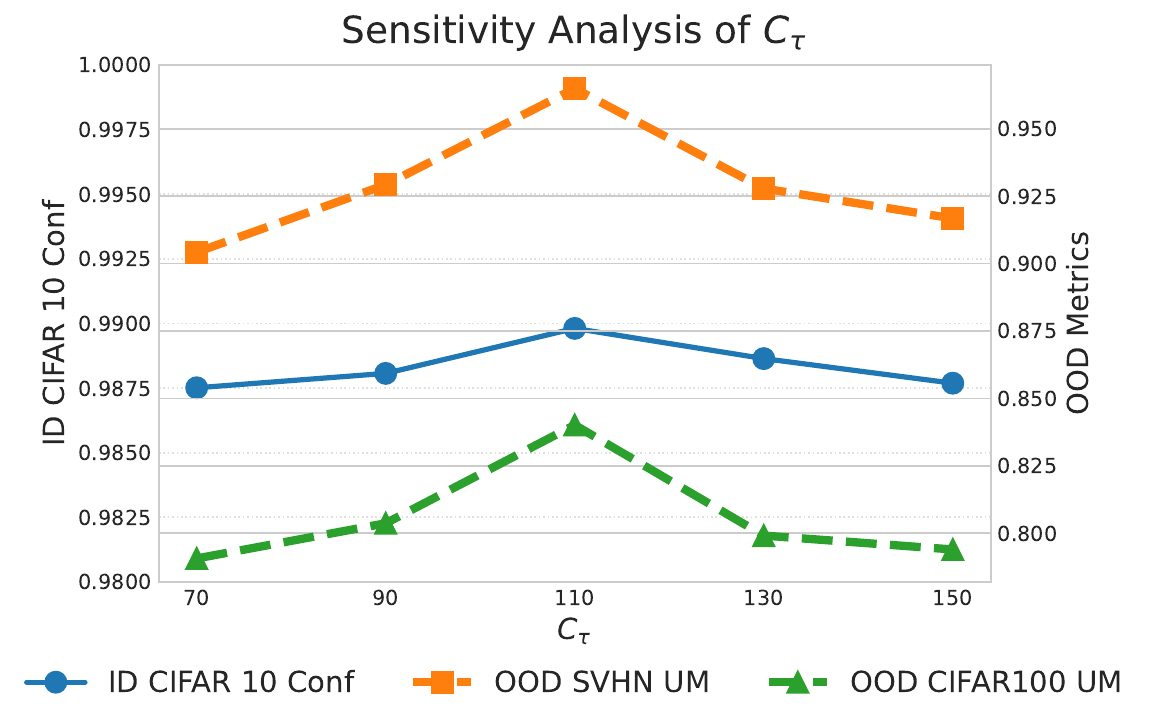}
        \caption{Sensitivity analysis w.r.t. evidence strength constant $C_{\tau}$.}
        \label{fig:ctau_sensitivity}
    \end{subfigure}
    \caption{Parameter sensitivity analysis of GEDL. 
    The performance is evaluated on CIFAR-10 (ID) and SVHN / CIFAR-100 (OOD) under different choices of hyperparameters.}
    \label{fig:param_analysis}
\end{figure}

\subsection{More baseline comparisons}

\begin{table}[h]
\centering
\caption{Extended comparison with additional baselines on MNIST and CIFAR-10. Results are reported as mean $\pm$ std over 5 runs. \textbf{Bold} indicates the best and \underline{underlined} indicates the second best. Methods highlighted with a \colorbox{gray!15}{gray background} are trained and evaluated during the rebuttal stage.}
\label{tab:additional_baselines}
\begin{adjustbox}{max width=\textwidth}
\begin{tabular}{l cc cc cc}
\toprule
& \multicolumn{2}{c}{ID Classification} & \multicolumn{2}{c}{OOD Detection (KMNIST/SVHN)} & \multicolumn{2}{c}{OOD Detection (FMNIST/CIFAR-100)} \\
\cmidrule(lr){2-3} \cmidrule(lr){4-5} \cmidrule(lr){6-7}
Method & Acc ($\uparrow$) & Conf.MP ($\uparrow$) & MP ($\uparrow$) & UM ($\uparrow$) & MP ($\uparrow$) & UM ($\uparrow$) \\
\midrule
\multicolumn{7}{c}{\textit{MNIST}} \\
\midrule
EDL        & $98.22_{\pm.31}$ & \bm{$99.99_{\pm.00}$} & $97.02_{\pm.76}$ & $96.31_{\pm2.03}$ & $98.11_{\pm.44}$ & $98.08_{\pm.42}$ \\
I-EDL      & $99.21_{\pm.08}$ & {$99.98_{\pm.00}$} & $98.34_{\pm.24}$ & $98.33_{\pm.24}$ & $98.89_{\pm.28}$ & $98.86_{\pm.29}$ \\
R-EDL      & $99.33_{\pm.03}$ & \bm{$99.99_{\pm.00}$} & $98.69_{\pm.19}$ & $98.69_{\pm.20}$ & \bm{$99.29_{\pm.11}$} & $99.29_{\pm.12}$ \\
\rowcolor{gray!15}
RED        & $99.38$ & $99.97 $& $98.71$ & $97.24$ & $99.34$ & $98.78$ \\
\rowcolor{gray!15}
DAEDL      & $99.43$ & \bm{$99.99$} & $98.92$ & \underline{$98.86$} & \underline{$99.40$} & \bm{$99.54$} \\
\rowcolor{gray!15}
F-EDL      & $99.48$ & \bm{$99.99$} & \underline{$99.02$} & $96.71$ & \bm{$99.66$} & $94.51$ \\
GEDL (Ours) & \bm{$99.58_{\pm.02}$} & $99.95_{\pm.01}$ & \bm{$99.80_{\pm.01}$} & \bm{$99.70_{\pm.04}$} & $98.54_{\pm.13}$ & $98.99_{\pm.05}$ \\
\midrule
\multicolumn{7}{c}{\textit{CIFAR-10}} \\
\midrule
EDL        & $83.55_{\pm.64}$ & $97.86_{\pm.17}$ & $78.87_{\pm3.50}$ & $79.12_{\pm3.69}$ & $84.30_{\pm.67}$ & $84.18_{\pm.74}$ \\
I-EDL      & $89.20_{\pm.32}$ & $98.72_{\pm.12}$ & $83.26_{\pm2.44}$ & $82.96_{\pm2.17}$ & $85.35_{\pm.69}$ & $84.84_{\pm.64}$ \\
R-EDL      & $90.09_{\pm.30}$ & $98.98_{\pm.05}$ & $85.00_{\pm1.22}$ & $85.00_{\pm1.22}$ & $87.72_{\pm.31}$ & \bm{$87.73_{\pm.31}$} \\
\rowcolor{gray!15}
RED        & 89.80 & 98.63 & 90.10 & \underline{$89.92$} & 84.97 & 84.62 \\
\rowcolor{gray!15}
DAEDL      & 90.06 & 98.80 & 85.28 & 85.30 & 84.34 & 84.50 \\
\rowcolor{gray!15}
F-EDL      & \bm{$91.29$} & \bm{$99.10$} & 88.57 & 61.04 & \underline{88.18} & 68.79 \\
GEDL (Ours) & \underline{$90.22_{\pm.49}$} & $97.15_{\pm3.72}$ & \bm{$89.26_{\pm3.26}$} & \bm{$92.56_{\pm6.02}$} & \bm{$88.27_{\pm2.51}$} & \underline{$85.42_{\pm5.14}$} \\
\bottomrule
\end{tabular}
\end{adjustbox}
\end{table}

\subsection{Calibration evaluations}

\begin{table}[h]
\centering
\caption{Calibration evaluation on CIFAR-10. ECE ($\downarrow$) and Brier Score ($\downarrow$) are reported. GEDL is mean $\pm$ std over 5 runs, others are single run. \textbf{Bold} indicates the best and \underline{underlined} indicates the second best. Methods highlighted with a \colorbox{gray!15}{gray background} are trained and evaluated during the rebuttal stage.}
\label{tab:calibration_cifar10}
\begin{adjustbox}{max width=\textwidth}
\begin{tabular}{l
>{\columncolor{gray!15}}c
>{\columncolor{gray!15}}c
>{\columncolor{gray!15}}c
>{\columncolor{gray!15}}c
>{\columncolor{gray!15}}c
>{\columncolor{gray!15}}c
c}
\toprule
Metric & EDL & I-EDL & R-EDL & RED & DAEDL & F-EDL & \textbf{GEDL (Ours)} \\
\midrule
ECE ($\downarrow$) 
& 0.1294 
& 0.3858 
& 0.0534 
& 0.0497 
& \underline{0.0443} 
& \bm{$0.0408$} 
& ${0.0464_{\pm0.0065}}$ \\

Brier ($\downarrow$) 
& 0.1965 
& 0.3145 
& 0.1820 
& 0.1720 
& 0.1622 
& \bm{$0.1348$} 
& \underline{$0.1525_{\pm0.0071}$} \\
\bottomrule
\end{tabular}
\end{adjustbox}
\end{table}

\end{document}